\title
{Training generative neural networks via Maximum Mean Discrepancy optimization}
\author{
{\bf Gintare Karolina Dziugaite}  \\
University of Cambridge \\
\And
{\bf Daniel M. Roy}  \\
University of Toronto   \\
\And
{\bf Zoubin Ghahramani}   \\
University of Cambridge \\
}
\crefname{lemma}{Lemma}{Lemmas}
\crefname{algorithm}{Algorithm}{Algorithms}
\crefname{theorem}{Theorem}{Theorems}
\newcommand{\LATER}[1]{\error}
\newcommand{\fLATER}[1]{\error}
\newcommand{\TBD}[1]{\error}
\newcommand{\fTBD}[1]{}
\newcommand{\PROBLEM}[1]{\error}
\newcommand{\fPROBLEM}[1]{}
\newtheorem{theorem}{Theorem}
\newtheorem{lemma}{Lemma}
\newtheorem{corollary}{Corollary}
\theoremstyle{definition}
\newtheorem{definition}{Definition}
\theoremstyle{remark}
\def\[#1\]{\begin{align}#1\end{align}}
\def\*[#1\*]{\begin{align*}#1\end{align*}}
\newcommand{\dee}{\mathrm{d}}
\newcommand{\truedist}{p_{\mathrm{data}}} 
\newcommand{\noisedist}{p_{\mathrm{noise}}} 
\newcommand{\TRV}{X} %
\newcommand{\NRV}{W} %
\newcommand{\RSS}{\mathcal W}
\newcommand{\GRV}{Y}
\newcommand{\grv}{y}
\newcommand{\trv}{x}
\newcommand{\nrv}{w}
\newcommand{\RKHS}{\mathcal H}
\newcommand{\OSpace}{\mathbb{X}}
\newcommand{\NSpace}{\mathbb{W}}
\newcommand{\genf}{G_\theta}
\newcommand{\empopt}{\hat\theta}
\newcommand{\classopt}{\theta^*}
\newcommand{\pbound}{\delta_\varepsilon}
\newcommand{\fatdim}{\text{fat}_{\varepsilon}}
\newcommand{\tvec}{\underline{X}}
\newcommand{\nvec}{\underline{W}}
\newcommand{\copynoise}{\zeta}
\newcommand{\radRV}{\sigma}
\newcommand{\Reals}{\mathbb{R}}
\newcommand{\Nats}{\mathbb{N}}
\newcommand{\MMD}{\mathrm{MMD}}
\newcommand{\MMDu}{\MMD_u}
\newcommand{\GKZclass}{\mathcal{G}_{k + }}
\newcommand{\GKclass}{\mathcal{G}_k}
\newcommand{\GKclassX}{\GKclass^{\OSpace}} 
\newcommand{\GKZclassX}{\GKZclass^{\OSpace}}
\newcommand{\EE}{E}%
\newcommand{\dist}{\sim}
\begin{document}
\maketitle

\begin{abstract}
We consider training a deep neural network to generate samples from an unknown distribution given i.i.d.\ data. We frame learning as an optimization minimizing a two-sample test statistic---informally speaking, a good generator network produces samples that cause a two-sample test to fail to reject the null hypothesis. As our two-sample test statistic, we use an unbiased estimate of the maximum mean discrepancy, which is the centerpiece of the nonparametric kernel two-sample test proposed by Gretton et al.~\citep{Gretton:2012}. We compare to the \emph{adversarial nets} framework introduced by Goodfellow et al.~\citep{Goodfellow:2014}, in which learning is a two-player game between a generator network and an adversarial discriminator network, both trained to outwit the other. From this perspective, the MMD statistic plays the role of the discriminator. In addition to empirical comparisons, we prove bounds on the generalization error incurred by optimizing the empirical MMD.
\end{abstract}

\newcommand{\Data}{\mathcal P}
\newcommand{\Noise}{\mathcal N}
\section{Introduction}

In this paper, we consider the problem of learning generative models from i.i.d.\ data with unknown distribution $\Data$. 
We formulate the learning problem as one of finding a function $G$, called the \emph{generator}, 
such that, given an input $Z$ drawn from some fixed \emph{noise} distribution $\Noise$, 
the distribution of the output $G(Z)$ is close to the data's distribution $\Data$.  Note that, given $G$ and $\Noise$, we can easily generate new samples despite not having an explicit representation for the underlying density.

We are particularly interested in the case where the generator is a deep neural network whose parameters we must learn.  Rather than being used to classify or predict, these networks transport input randomness to output randomness, thus inducing a distribution.
The first direct instantiation of this idea is due to \citet{MacKay94}, 
although MacKay draws connections even further back to the work of \citet{Saund89} and others on autoencoders, suggesting that generators can be understood as decoders.
MacKay's proposal, called \emph{density networks}, uses multi-layer perceptrons (MLP) as generators and learns the parameters by approximating Bayesian inference.

Since MacKay's proposal, there has been a great deal of progress on learning generative models, especially over high-dimensional spaces like images.  Some of the most successful approaches have been based on restricted Boltzmann machines %
\citep{SalakhutdinovH09}
and deep Boltzmann networks %
\citep{HintonSalakhutdinov2006b}.  
A recent example is the Neural Autoregressive Density Estimator due to \citet{UriaML13a}.  
An indepth survey, however, is beyond the scope of this article.

This work builds on a proposal due to \citet{Goodfellow:2014}.
Their \emph{adversarial nets} framework 
takes an indirect approach to learning deep generative neural networks: a discriminator network is trained to recognize the difference between training data and generated samples, while the generator is trained to confuse the discriminator.  The resulting two-player game is cast as a minimax optimization of a differentiable objective and solved greedily by iteratively performing gradient descent steps to improve the generator and then the discriminator.

Given the greedy nature of the algorithm, \citet{Goodfellow:2014} give a careful prescription for balancing the training of the generator and the discriminator.  In particular, two gradient steps on the discriminator's parameters are taken for every iteration of the generator's parameters.  It is not clear at this point how sensitive this balance is as the data set and network vary.
In this paper, we describe an approximation to adversarial learning that replaces the adversary with
a closed-form nonparametric two-sample test statistic based on the Maximum Mean Discrepancy (MMD),
which we adopted from the kernel two sample test \citep{Gretton:2012}. 
We call our proposal \emph{MMD nets}.\footnote{In independent work reported in a recent preprint, Li, Swersky, and Zemel \citep{LSZ} also propose to use MMD as a training objective for generative neural networks.  We leave a comparison to future work.} 
We give bounds on the estimation error incurred by optimizing an empirical estimator rather than the true population MMD and give some illustrations on synthetic and real data.

\section{Learning to sample as optimization}

\begin{figure*}[t]
\centering
\includegraphics[width=\linewidth]{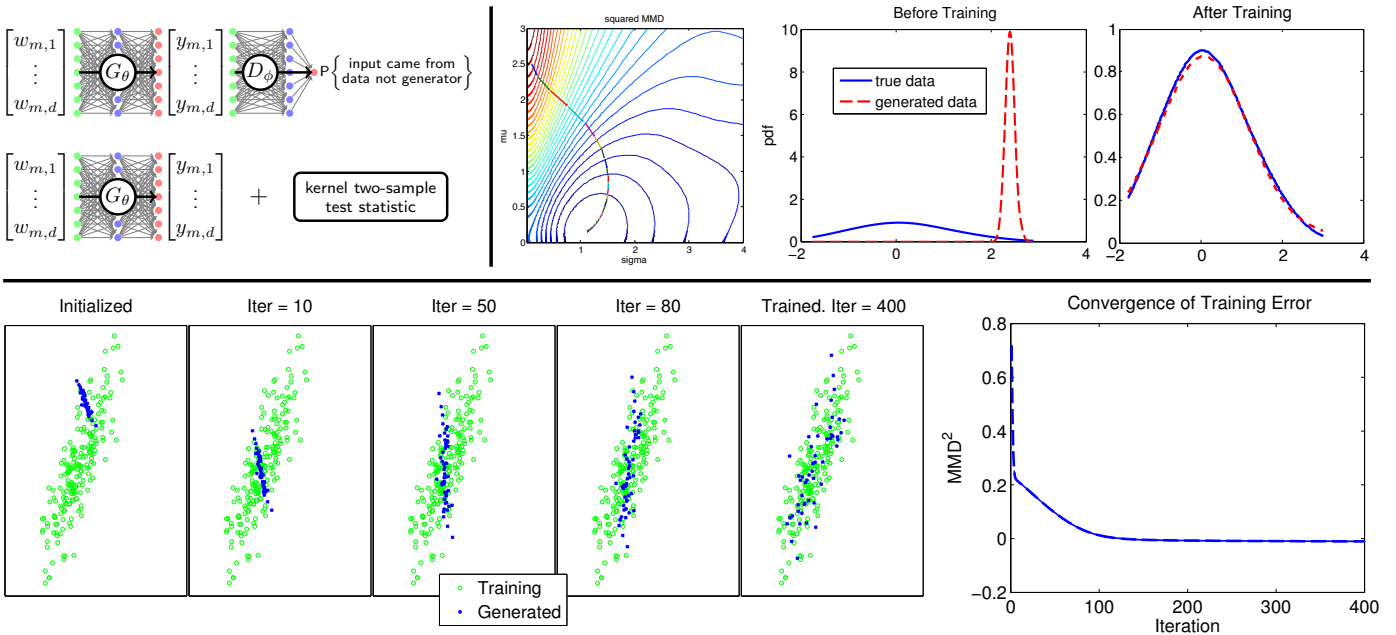}
\caption{
{\bf (top left)} Comparison of adversarial nets and MMD nets.
{\bf (top right)}
Here we present a simple one-dimensional illustration of optimizing a generator via MMD. Both the training data and noise data are Gaussian distributed and we consider the class of generators given by $G_{(\mu,\sigma)}(w) = \mu + \sigma w$.
The plot on the left shows the isocontours of the MMD-based cost function and the path taken by gradient descent.  On right, we show the distribution of the generator before and after a number of training iterations, as compared with the data generating distribution.  Here we did not resample the generated points and so we do not expect to be able to drive the MMD to zero and match the distribution exactly.
{\bf (bottom)}
The same procedure is repeated here for a two-dimensional dataset.  On the left, we see the gradual alignment of the Gaussian-distributed input data to the Gaussian-distributed output data as the parameters of the generator $G_\theta$ are optimized.  The learning curve on the right shows the decrease in MMD obtained via gradient descent. 
}
\label{fig:advVSmmd}
\end{figure*}

It is well known that, for any distribution $\Data$ 
and any continuous distribution $\Noise$ on sufficiently regular spaces
$\OSpace$ and $\NSpace$, respectively,
there is a function $G : \NSpace \to \OSpace$, such that $G(W) \dist \Data$ when $W \dist \Noise$. (See, e.g., \citep[][Lem.~3.22]{FMP2}.)
In other words, we can transform an input from a fixed input distribution $\Noise$ through a deterministic function, producing an output whose distribution is $\Data$.
For a given family $\{G_\theta\}$ of functions $\NSpace \to \OSpace$, called \emph{generators},
we can cast the problem of learning a generative model as an optimization
\[ \label{eq:optimizedivergence}
\arg\min_\theta \delta(\Data,G_\theta(\Noise)),
\]
where $\delta$ is some measure of discrepancy and $G_\theta(\Noise)$ is the distribution of $G_\theta(W)$ when $W \dist \Noise$. In practice, we only have i.i.d.\ samples $X_1,X_2,\dotsc$ from $\Data$, and so we optimize an empirical estimate of $\delta(\Data,G_\theta(\Noise))$.

\newcommand{\AN}{\mathrm{AN}}
\subsection{Adversarial nets}

Adversarial nets \citep{Goodfellow:2014} can be cast within this framework:
Let $\{D_\phi\}$ be a family of functions $\OSpace \to [0,1]$, called \emph{discriminators}.
We recover the adversarial nets objective with the discrepancy
\*[
\delta_{\AN}(\Data,G_\theta(\Noise)) = \max_{\phi} \EE \bigl [\log D_\phi(X) + \log (1 - D_\phi(Y)) \bigr],
\*]
where
$X \dist \Data$ and $Y \dist G_\theta(\Noise)$.
In this case, \cref{eq:optimizedivergence} becomes
\*[
	\min_{\theta} \max_{\phi}\, V(\genf,D_\phi) 
\*]
where 
\*[
V(\genf,D_\phi) = \EE \bigl [ \log D_\phi(\TRV)  
                            +  \log (1-D_\phi(\genf(\NRV))) \bigr ]
\*]
for $X \dist \Data$ and $W \dist \Noise$. 
The output of the discriminator $D_\phi$ can be interpreted as the probability it assigns to its input being drawn from $\Data$, and so $V(\genf,D_\phi)$ is the expected log loss incurred when classifying the origin of a point equally likely to have been drawn from $\Data$ or $\genf(\Noise)$.  
Therefore, optimizing $\phi$ maximizes the probability of distinguishing samples from $\Data$ and $\genf(\Noise)$.  
Assuming that the optimal discriminator exists for every $\theta$, the optimal generator $G$ is that whose output distribution is closest to $\Data$, as measured by the Jensen--Shannon divergence, which is minimized when $\genf(\Noise) = \Data$.

In \citep{Goodfellow:2014}, the generators $\genf$ and discriminators $D_\phi$ are chosen to be multilayer perceptrons (MLP). 
In order to find a minimax solution, they propose taking alternating gradient steps along  $D_\phi$ and $\genf$.
Note that the composition $D_\phi(\genf(\cdot))$ that appears in the value function is yet another (larger) MLP. 
This fact permits the use of the back-propagation algorithm to take gradient steps.

\subsection{MMD as an adversary}

In their paper introducing adversarial nets, \citet{Goodfellow:2014} remark that a balance must be struck between optimizing the generator and optimizing the discriminator.
In particular, the authors suggest
$k$ maximization steps for every one minimization step to ensure that $D_\phi$ is well synchronized with $G_\theta$ during training.
A large value for $k$, however, can lead to overfitting.
In their experiments, for every step taken along the gradient with respect to $\genf$, they take two gradient steps with respect to  $D_\phi$ to bring $D_\phi$ closer to the desired optimum (Goodfellow, pers.\ comm.).

It is unclear how sensitive this balance is.  Regardless, while adversarial networks deliver impressive sampling performance, the optimization takes approximately 7.5 hours
to train on the MNIST dataset 
running on a nVidia GeForce GTX TITAN GPU with 6GB RAM.
Can we potentially speed up the process with a more tractable choice of adversary?

Our proposal is to replace the adversary with the kernel two-sample test introduced by \citet{Gretton:2012}.
In particular, we replace the family of discriminators with a family $\RKHS$ of test functions $\OSpace \to \Reals$, closed under negation, and use the maximum mean discrepancy 
between $\Data$ and $G_\theta(\Noise)$ over $\RKHS$, given by
\[\label{MMDopt}
\delta_{\MMD_\RKHS} (\Data,G_\theta(\Noise)) = \sup_{f \in \RKHS} \EE [ f(X)] - \EE[f(Y)],
\]
where $X \dist \Data$ and $Y \dist G_\theta(\Noise)$.
See \cref{fig:advVSmmd} for a comparison of the architectures of adversarial and MMD nets.

While \cref{MMDopt} involves a maximization over a family of functions, 
\citet{Gretton:2012} show that it can be solved in closed form when $\RKHS$ is a reproducing kernel Hilbert space (RKHS).  

More carefully, 
let $\RKHS$ be a reproducing kernel Hilbert space (RKHS) of real-valued functions on $\Omega$ and let $\langle\cdot, \cdot \rangle_{\RKHS}$ denote its inner product. By the reproducing property it follows that there exists a \textit{reproducing kernel} $k \in \RKHS$ such that every $f \in \RKHS$ can be expressed as
\begin{equation}\label{RKHSequation}
	f(x) = \langle f, k(\cdot, x) \rangle_{\RKHS} = \sum \alpha_i k(x,x_i)
\end{equation}  
The functions \emph{induced by a kernel $k$} are those functions in the closure of the span of the set 
$ \{ k(\cdot,x): x \in \Omega \}$, which is necessarily an RKHS.
Note, that for every positive definite kernel there is a unique RKHS $\RKHS$ such that every function in $\RKHS$ satisfies \cref{RKHSequation}.

Assume that $\OSpace$ is a nonempty compact metric space and $\mathcal{F}$ a class of functions $f: \OSpace \rightarrow \Reals$. Let $p$ and $q$ be Borel probability measures on $\OSpace$, and let $X$ and $Y$ be random variables with distribution $p$ and $q$, respectively.
The \emph{maximum mean discrepancy} (MMD) between $p$ and $q$ is 
\[
\MMD(\mathcal{F},p,q) = \sup_{f \in \mathcal{F}} \EE [f(X)] - \EE [f(Y)]
\]
If $\mathcal{F}$ is chosen to be an RKHS $\RKHS$, 
then 
\[
\MMD^2(\mathcal F, p,q) = \| \mu_p - \mu_q \|_{\RKHS}^2
\]
where $\mu_p \in \RKHS$ is the \emph{mean embedding} of $p$,
given by
\[
 \mu_p = \int_{\OSpace} k(x,\cdot) \, p(\dee x) \in \RKHS 
\]
and satisfying, for all $f \in \RKHS$,
\*[
\EE[ f(X)] = \langle f, \mu_p\rangle_{\RKHS}.
\*]
The properties of $\MMD(\RKHS,\cdot,\cdot)$ depend on the underlying RKHS $\RKHS$.  For our purposes, 
it suffices to say that if we take $\OSpace$ to be $\Reals^D$ and consider the RKHS $\RKHS$ induced by Gaussian or Laplace kernels, then MMD is a metric,
and so the minimum of our learning objective is achieved uniquely by $\Data$, as desired.  
(For more details, see \citet{Fukumizu2008}.)

In practice, we often do not have access to $p$ or $q$.  Instead, we are given
independent i.i.d.\ data $X,X',X_1,\dotsc,X_N$ and $Y,Y',Y_1,\dotsc,Y_M$ fom $p$ and $q$, respectively, and would like to estimate the MMD.
\citet{Gretton:2012} showed that
	\begin{equation}\label{eq:MMD}
	\MMD^2[\RKHS,p,q] = \EE [k(X,X')- 2 k(X,Y) +  k(Y,Y')]
	\end{equation}
and then proposed an unbiased estimator
\begin{equation}  \label{eq:MMDempirical}
\begin{split}
\MMDu^2[\RKHS,X,Y] = 
\,& \frac{1}{N (N-1)} \sum_{n \neq n'} k(x_n,x_{n'}) \\
& + \frac{1}{M (M-1)} \sum_{m \neq m'}  k(y_m,y_{m'}) \\
& - \frac{2}{M N} \sum_{m=1}^{M} \sum_{n =1}^{N} k(x_n,y_m).
\end{split}
\end{equation}

\begin{algorithm}[t]
\caption{Stochastic gradient descent for MMD nets.}\label{stochasticalgorithm}
\begin{algorithmic}
\State \text{Initialize $M$, $\theta$, $\alpha$, $k$}
 \State \text{Randomly divide training set $\TRV$  into $N_{\text{mini}}$ mini batches}
\For{$i\gets 1, \text{number-of-iterations}$}
      \State \text{Regenerate noise inputs $\{\nrv_i\}_{i=1,...,M}$ every $r$ iterations}
	\For{$n_{\text{mini}} \gets 1, N_{\text{mini}}$}
	   \For{$m \gets 1, M$}
      		\State $\grv_m \gets G_{\theta} (\nrv_m)$  
   	   \EndFor 
   	   \State \text{compute the n'th minibatch's gradient $\nabla C^{(n)}$}
   	   \State \text{update learning rate $\alpha$ (e.g., RMSPROP)}
   	   \State $\theta\gets \theta - \alpha \nabla C_n$
	\EndFor 
\EndFor
\end{algorithmic}
\end{algorithm}

\section{MMD Nets}

With an unbiased estimator of the MMD objective in hand, we can now define our proposal, \emph{MMD nets}:
Fix a neural network $G_\theta$, where $\theta$ represents the parameters of the network.  
Let $W=(w_1,\dotsc,w_M)$ denote noise inputs drawn from $\Noise$, 
let $Y_\theta=(y_1,\dotsc,y_m)$ with $y_j = G_\theta(w_j)$ denote the noise inputs transformed by the network $G_\theta$,
and let
$\TRV = (\trv_1,..,\trv_N)$ denote the training data in $\Reals^D$.
Given a positive definite kernel $k$ on $\Reals^D$, we minimize $C(Y_\theta,X)$ as a function of $\theta$, 
where 
\begin{equation} \label{costfunctionMMD}
\begin{split}
C (\GRV_{\theta},\TRV)
&= \frac{1}{M (M-1)}\sum_{m \neq m'} k(\grv_m,\grv_{m'}) 
\\&\qquad- \frac{2}{M N} \sum_{m=1}^{M} \sum_{n =1}^{N} k(\grv_m,\trv_n).
\end{split}
\end{equation}
Note that $C(Y_\theta,X)$ is comprised of only those parts of the unbiased estimator that depend on $\theta$.

In practice, the minimization is solved by gradient descent, possibly on subsets of the data.  More carefully,
the chain rule gives us 
\begin{align}
\nabla C (\GRV_{\theta},\TRV) 
	&= \frac{1}{N}\sum_{n=1}^N  \sum_{m=1}^M \frac{\partial C_n(\GRV_{\theta},\TRV_n)}{\partial \grv_m} \frac{\partial G_{\theta} (\nrv_m)}{\partial \theta} ,
\end{align}
where
\begin{equation}
\begin{split}
 C_n(\GRV_{\theta},\TRV_n) 
 &= \frac{1}{M (M-1)} \sum_{m\neq m'} k(\grv_m,\grv_{m'})
 \\&\qquad- \frac{2}{M} \sum_{m=1}^{M} k(\grv_m,\trv_n).
\end{split}
\end{equation}
Each derivative $ \frac{\partial C_n(\GRV_{\theta},\TRV_n)}{\partial \grv_m} $ is easily computed for standard kernels like the RBF kernel. Our gradient $\nabla C(\GRV_{\theta},\TRV_n) $ depends on the partial derivatives of the generator with respect to its parameters, which we can compute using back propagation.

\section{Generalization bounds for MMD}

 MMD nets operate by minimizing an empirical estimate of the MMD.  This estimate is subject to Monte Carlo error and so the network weights (parameters) $\hat \theta$ that are found to minimize the empirical MMD may do a poor job at minimizing the exact population MMD.  We show that, for sufficiently large data sets, this estimation error is bounded, despite the space of parameters $\theta$ being continuous and high dimensional.

Let $\Theta$ denote the space of possible parameters for the generator $G_\theta$, 
let $\Noise$ be the distribution on $\RSS$  for the noisy inputs,
and let 
$p_\theta = G_\theta(\Noise)$ be the distribution of $G_\theta(\NRV)$ when $\NRV \dist \Noise$ for $\theta \in \Theta$.
Let $\empopt$ be the value optimizing the unbiased empirical MMD estimate, i.e.,
\[ \label{eq:eMMDminimizer}
\MMDu^2(\RKHS,\TRV,\GRV_{\empopt})
= \inf_{\theta} \MMDu^2(\RKHS,\TRV,\GRV_{\theta}),
\]
and let $\classopt$ be the value optimizing the population MMD, i.e.,
\[
\MMD^2(\RKHS,\truedist,p_{\classopt})  =  \inf_{\theta} \MMD^2(\RKHS,\truedist,p_{\theta}).
\]
We are interested in bounding the difference
\[
 \MMD^2(\RKHS,\truedist,p_{\empopt}) - \MMD^2(\RKHS,\truedist,p_{\classopt}).
\]

To that end, for a measured space $\mathcal{X}$,
write $L_\infty (\mathcal{X})$ for the space of essentially bounded functions on $\mathcal{X}$ and write $B(L_\infty (\mathcal{X}))$ for the unit ball under the sup norm, i.e., 
\*[
B(L_\infty (\mathcal{X})) = \{ f \colon\mathcal{X} \rightarrow \mathbb{R} \,: (\forall x\ \in \mathcal{X}) f(x) \in [-1,1]  \}.
\*]
The bounds we obtain will depend on a notion of complexity captured by the fat-shattering dimension:
\begin{definition}[Fat-shattering {\citep{Mendelson03}}]
Let $\mathcal{X}_N = \{x_1,\dots,x_N\} \subset \mathcal{X}$ and   $\mathcal{F} \subset B(L_\infty (\mathcal{X}))$. For every $\varepsilon > 0$, $\mathcal{X}_N$ is said to be \emph{$\varepsilon$-shattered by $\mathcal{F}$} if there is some function $h:\ \mathcal{X} \rightarrow \mathbb{R}$, such that  for every $I \subset \{1,\dots , N\}$ there is some $f_I \in \mathcal{F}$ for which 
\[
f_I (x_n) &\geq h(x_n) + \varepsilon \, \text{ if } \, n \in I, \\
f_I(x_n) &\leq h(x_n) - \varepsilon\, \text{ if } \, n \notin I.
\]
For every $\varepsilon$, the \emph{fat-shattering dimension of $\mathcal{F}$}, written $\fatdim(\mathcal{F})$, is defined as
\*[
\fatdim(\mathcal{F}) = \sup \left\{ |\mathcal{X}_N | : \, \mathcal{X}_N \subset \ \mathcal{X},\, \mathcal{X}_N \text{ is }\varepsilon \text{-shattered by } \mathcal{F}  \right\}
\*]
\end{definition}
We then have the following bound on the estimation error:
\begin{theorem}[estimation error]\label{maintheorem}
Assume the kernel is bounded by one.
Define 
\[
\GKZclass = \{ g = k(\genf (\nrv),\genf (\cdot)) : \, \nrv \in \RSS,\, \theta \in \Theta \} 
\] 
and
\[
\GKZclassX = \{ g = k(\trv,\genf (\cdot)) : \, \trv \in \OSpace,\, \theta \in \Theta \}.
\]
Assume  
there exists $\gamma_1,\gamma_2 > 1$ and $p_1, p_2 \in \Nats$ such that, for all $\varepsilon>0$, 
	it holds that $\fatdim (\GKZclass) \leq \gamma_1 \varepsilon^{-p_1}$ and  $\fatdim (\GKZclass^X ) \leq \gamma_2 \varepsilon^{-p_2}$.
Then with probability at least $1-\delta$, 
\[
 \MMD^2(\RKHS,\truedist,p_{\empopt}) < \MMD^2(\RKHS,\truedist,p_{\classopt}) + \varepsilon,
\]
with 
\[
\varepsilon 
 &= r(p_1,\gamma_1,M) + r(p_2,\gamma_2,M-1) + 12 M^{-\frac 1 2} \sqrt { \log \frac{2}{\delta}},
\]
where the rate $r(p,\gamma,N)$ is
\[ 
r(p,\gamma,M) = C_{p} \sqrt{\gamma}
\begin{cases}  
        M^{- \frac 1 2}        &\text{if } p<2,\\ 
        M^{- \frac 1 2} \log^{\frac 3 2} (M)   &\text{if } p = 2, \\ 
	M^{-\frac{1}{p}}   &\text{if } p>2,
\end{cases}
\] 
for constants $C_{p_1}$ and $C_{p_2}$ depending on $p_1$ and $p_2$ alone.
\end{theorem}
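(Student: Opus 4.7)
The plan is a standard empirical-process argument for an $M$-estimator: reduce the excess risk to a uniform deviation, then control that deviation by symmetrization and a Rademacher-to-fat-shattering conversion. First, by inserting and subtracting $\MMDu^2$ at $\empopt$ and $\classopt$, and using the defining property of $\empopt$ to kill the middle term, I would obtain
\begin{align*}
\MMD^2(\RKHS, \truedist, p_{\empopt}) - \MMD^2(\RKHS, \truedist, p_{\classopt}) \leq 2 \sup_{\theta \in \Theta} \bigl| \MMDu^2(\RKHS, \TRV, \GRV_\theta) - \MMD^2(\RKHS, \truedist, p_\theta) \bigr|.
\end{align*}
This reduces the theorem to a high-probability bound on the right-hand side.

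To analyze the supremum, I would decompose $\MMDu^2 - \MMD^2$ into its three natural pieces: a sample-only term $A$ that does not depend on $\theta$ (the deviation of $\frac{1}{N(N-1)}\sum_{n\neq n'} k(x_n,x_{n'})$ from $\EE[k(X,X')]$), a generator-only term $B(\theta)$ (the deviation of the $\genf(W_m)$-double sum from $\EE[k(Y_\theta, Y'_\theta)]$), and a cross term $C(\theta)$ (twice the deviation of $\frac{1}{MN}\sum_{n,m} k(x_n,\genf(w_m))$ from $\EE[k(X, Y_\theta)]$). For the cross term, the function $w \mapsto k(x_n, \genf(w))$ lies in $\GKZclassX$, so $\sup_\theta |C(\theta)|$ is an empirical process over that class. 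For the generator term, I would peel off the outer $m$-index and condition on $W_m$; the inner average then becomes an empirical mean of the function $w \mapsto k(\genf(W_m), \genf(w))$, which by construction is in $\GKZclass$. The remaining randomness in $W_m$ is handled by a $U$-statistic reduction (Hoeffding's permutation representation or two-stage conditioning), which is also the source of the $M-1$ effective sample size appearing in the stated rate.

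With the three pieces in hand, I would apply McDiarmid's bounded-differences inequality to $|A|$, $\sup_\theta |B(\theta)|$, and $\sup_\theta |C(\theta)|$. Since the kernel is bounded by one, each sup changes by $O(1/M)$ when a single noise or data sample is perturbed, so McDiarmid yields concentration of each around its expectation at rate $M^{-1/2}\sqrt{\log(2/\delta)}$; a union bound with appropriate constants then produces the $12 M^{-1/2}\sqrt{\log(2/\delta)}$ contribution. Bounding the expected suprema via symmetrization leaves Rademacher averages of $\GKZclass$ (at scale $M$) and $\GKZclassX$ (at scale $M-1$). Finally, I would invoke Mendelson's Dudley-type chaining bound, which says that for any $\mathcal F \subseteq B(L_\infty(\mathcal X))$ with $\fatdim(\mathcal F) \leq \gamma \varepsilon^{-p}$, the Rademacher complexity is bounded by $r(p,\gamma,M)$ in the three regimes $p<2$, $p=2$, $p>2$ exactly as stated. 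Applied to the two hypotheses this supplies the $r(p_1,\gamma_1,M)$ and $r(p_2,\gamma_2,M-1)$ terms.

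The main obstacle I expect is the treatment of the double sum $\sum_{m\neq m'} k(\genf(W_m), \genf(W_{m'}))$. Because both arguments are coupled through the same $\theta$, this is neither a simple i.i.d.\ empirical average nor an empirical process over a class of single-variable functions; the definition of $\GKZclass$ is engineered precisely to absorb this coupling by jointly indexing one $\genf(w)$-argument with $\theta$. One still has to execute the $U$-statistic reduction carefully, so that the end result is a single Rademacher complexity of $\GKZclass$ (and not, for example, a complexity of pairs with a worse dependence on $M$). Once that bookkeeping is in place, the remaining steps---symmetrization and the Mendelson chaining bound---are standard, and the piecewise form in $p_1, p_2$ in the theorem statement simply inherits the three regimes of that chaining bound.
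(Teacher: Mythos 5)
Your proposal follows essentially the same route as the paper: the same reduction of the excess risk to twice a uniform deviation, the same three-way decomposition (a $\theta$-free data term handled by McDiarmid, a generator-only $U$-statistic term, and a cross term), the same symmetrization with ghost samples and conditioning to reduce the double sum to a Rademacher average over $\GKZclass$ (this is exactly the paper's Lemma~\ref{lem:boundFunctionExpectation}, with the $M-1$ effective sample size arising just as you anticipate), and the same invocation of Mendelson's fat-shattering-to-Rademacher bound to produce the three-regime rate. The only discrepancy is a trivial bookkeeping one --- which of the two classes inherits the $M$ versus $M-1$ scale --- on which the paper's own theorem statement and proof already disagree with each other.
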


The proof appears in the appendix.
We can obtain simpler, but slightly more restrictive, hypotheses if we bound the fat-shattering dimension of the class of generators $\{G_\theta : \theta \in \Theta\}$ alone: 
Take the observation space $\OSpace$ to be a bounded subset of a finite-dimensional Euclidean space and the kernel to be Lipschitz continuous and translation invariant.
For the RBF kernel, the Lipschitz constant  is proportional to the inverse of the length-scale: the resulting bound loosens as the length scale shrinks. 

\section{Empirical evaluation}

In this section, we demonstrate the approach on an illustrative synthetic example as well as the standard MNIST digits and Toronto Face Dataset (TFD) benchmarks.  We show that MMD-based optimization of the generator rapidly delivers a generator that performs well in maximizing the density of a held-out test set under a kernel-density estimator.

\subsection{Gaussian data, kernel, and generator}

Under an RBF kernel and Gaussian generator with parameters  $\theta = \{\mu, \sigma \}$, it is straightforward to find the gradient of $C(\GRV_\theta,\TRV)$ by applying the chain rule.
Using fixed random standard normal numbers $\{\nrv_1,...,\nrv_M \}$,
we have 
$\grv_m= \mu + \sigma  \nrv_m$ for $m \in \{1,..,M\}$.
The result of these illustrative synthetic experiments can be found in \cref{fig:advVSmmd}. 
The dataset consisted of $N=200$ samples from a standard normal and $M=50$ noise input samples were generated from a standard normal with a fixed random seed. The algorithm was initialized at values $\{\mu,\sigma\} = \{2.5, 0.1\}$. We fixed the learning rate to $0.5$ and ran gradient descent steps for $K=250$ iterations. 

\subsection{MNIST digits}

\begin{figure*}[t]
\centering
\includegraphics[width=.8\linewidth]{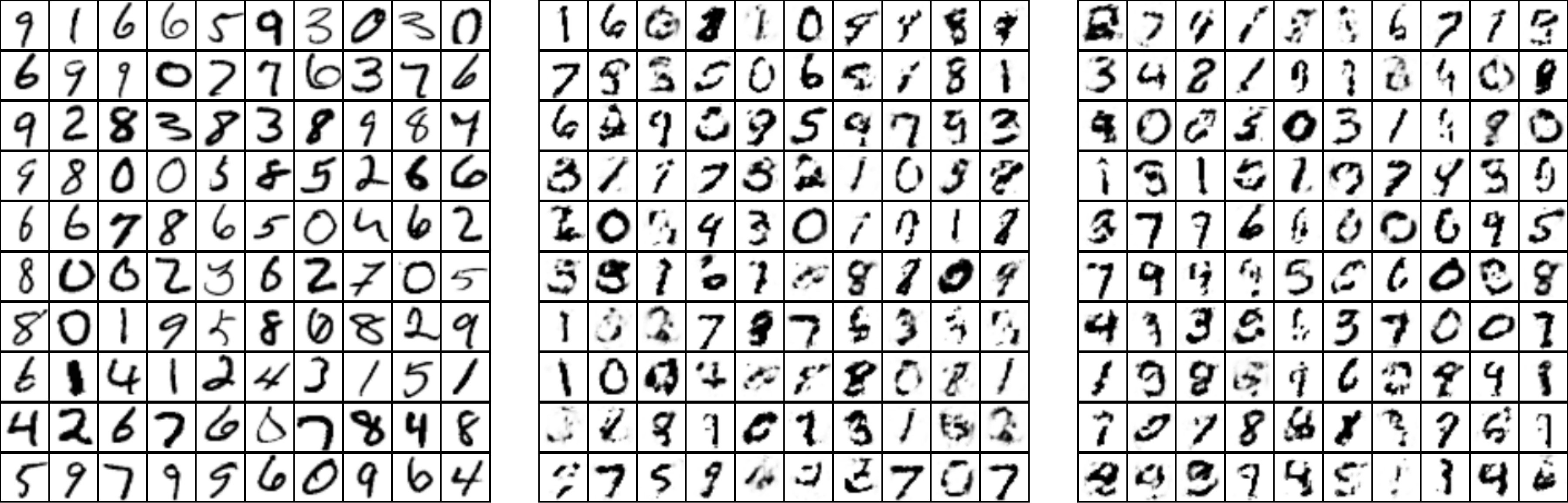}
\includegraphics[width=.8\linewidth]{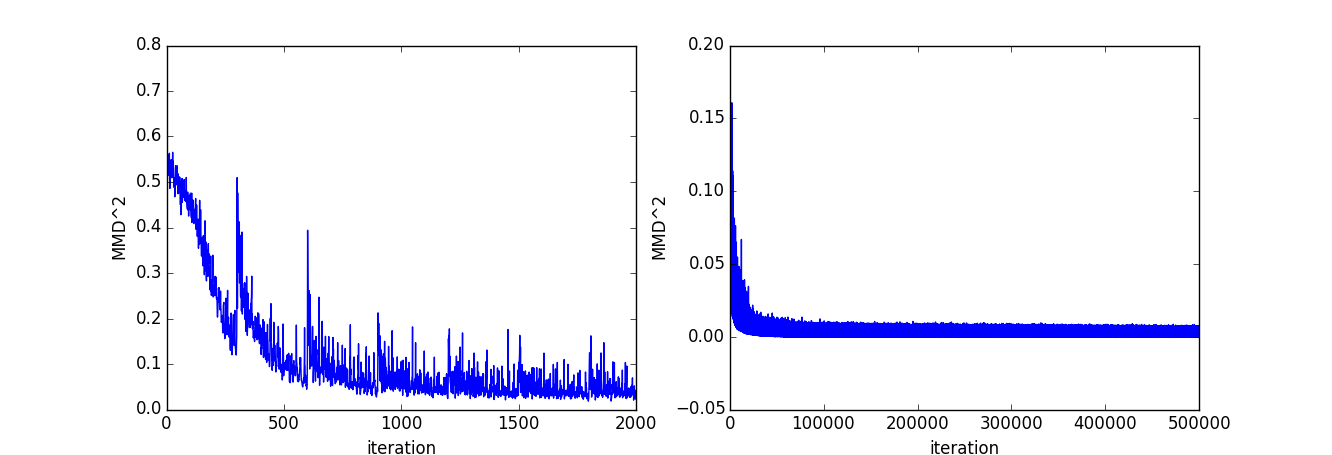}
\caption{
{\bf (top-left)} MNIST digits from the training set.
{\bf (top-right)} Newly generated digits produced after 1,000,000 iterations (approximately 5 hours). Despite the remaining artifacts, the resulting kernel-density estimate of the test data is state of the art.
{\bf (top-center)} Newly generated digits after 300 further iterations optimizing the associated empirical MMD.
{\bf (bottom-left)} MMD learning curves for first 2000 iterations.
{\bf (bottom-right)} MMD learning curves from 2000 to 500,000 iterations.  Note the difference in y-axis scale. No appreciable change is seen in later iterations.
}
\label{results}
\end{figure*}

We trained our generative network on the MNIST digits dataset \citep{Lecun98}. The generator was chosen to be a fully connected, 3 hidden layers neural network with sigmoidal activation functions. 
Following \citet{Gretton:2012}, we used a radial basis function (RBF) kernel, but also evaluated the rational quadratic (RQ) kernel \citep{ CarlGPs} and Laplacian kernel, but found that the RBF performed best in the parameter ranges we evaluated. 
We used Bayesian optimization (WHETLab) to set the bandwidth of the RBF and the number of neurons in each layer on initial test runs of 50,000 iterations. We used the median heuristic suggested by \citep{Gretton:2012} for the kernel two-sample test to choose the kernel bandwidth.
The learning rate was adjusting during optimization by RMSPROP  \citep{Tieleman2012}.

\cref{results} presents the digits learned after 1,000,000 iterations.  We performed minibatch stochastic gradient descent, resampling the generated digits every 300 iterations, using minibatches of size 500, with equal numbers of training and generated points.  It is clear that the digits produced have many artifacts not appearing in the MNIST data set.  Despite this, the mean log density of the held-out test data is  ${\bf 315 \pm 2}$, as compared with the reported ${\bf 225 \pm 2}$ mean log density achieved by adversarial nets.

There are several possible explanations for this.  First, kernel density estimation is known to perform poorly in high dimensions.  Second, the MMD objective can itself be understood as the squared difference of two kernel density estimates, and so, in a sense, the objective being optimized is directly related to the subsequent mean test log density evaluation.  There is no clear connection for adversarial networks, which might explain why it suffers under this test.  Our experience suggests that the RBF kernel delivers base line performance but that an image-specific kernel, capturing, e.g., shift invariance, might lead to better images.

\subsection{Toronto face dataset}

\begin{figure*}[t]
\centering
\includegraphics[width=.8\linewidth]{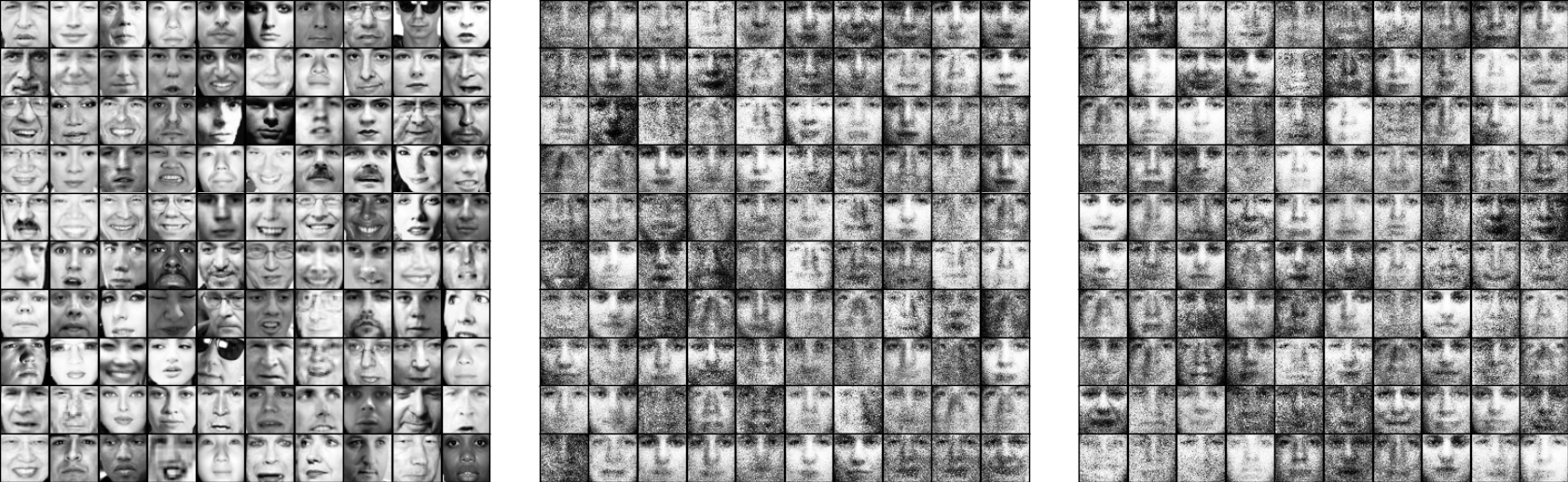}
\caption{
{\bf (left)} TFD.
{\bf (right)} Faces generated by network trained for 500,000 iterations.
{\bf (center)} Generated points after 500 iterations.
}
\label{fig:tfd faces}
\end{figure*}

We have also trained the generative MMD network on Toronto face dataset (TFD) \cite{TFD}. The parameters were adapted from the MNIST experiment: we also used a 3-hidden-layer sigmoidal MLP with similar architecture (1000, 600, and 1000 units) and RBF kernel for the cost function with the same hyper parameter. The training dataset batch sizes were equal to the number of generated points (500). The generated points were resampled every 500 iterations. The network was optimized for 500,000 iterations.

The samples from the resulting network are plotted in \cref{fig:tfd faces}.
The mean log density of the held-out test set is 2283 $\pm$ 39.  Although this figure is higher than the mean log density of 2057 $\pm$ 26 reported in \cite{Goodfellow:2014}, the samples from the MMD network are again clearly distinguishable from the training dataset. Thus the high test score suggests that kernel density estimation does not perform well at evaluating the performance for these high dimensional datasets.

\section{Conclusion}

MMD offers a closed form surrogate for the discriminator in adversarial nets framework.  After using Bayesian optimization for the parameters, we found that the network outperformed the adversarial network in terms of the density of the held-out test set under kernel density estimation. On the other hand, there is a clear discrepancy between the digits produced by MMD Nets and the MNIST digits, which might suggest that KDE is not up to the task of evaluating these models.  Given how quickly MMD Nets achieves this level of performance, it is worth considering its use as an initialization for more costly procedures.

\section*{Acknowledgments}

The authors would like to thank Bharath Sriperumbudur for technical discussions. 

\appendix

\section{Proofs}

We begin with some preliminaries and known results:
\begin{definition}[\citep{Mendelson03}]
A random variable $\radRV$ is said to be a \emph{Rademacher random variable} if it takes values in $\{-1,1\}$, each with probability $1/2$.
\end{definition}

\begin{definition}[\citep{Mendelson03}]
Let $\mu$ be a probability measure on $\mathcal{X}$, and let $\mathcal{F}$ be a class of uniformly bounded functions on $\mathcal{X}$. 
Then the \emph{Rademacher complexity} of $\mathcal F$ (with respect to $\mu$) is 
\*[
R_N(\mathcal{F}) = \EE_{\mu} \EE_{\radRV_1,\dots, \radRV_N} \left[ \frac{1}{\sqrt{N}} \sup_{f \in \mathcal{F}} \Bigl | \sum_{n=1}^{N} \radRV_n f(X_n) \Bigr | \right],
\*]
where 
$\radRV = (\radRV_1, \radRV_2,\dotsc)$ is a sequence of independent Rademacher random variables,
and $X_1,X_2,\dotsc$ are independent, $\mu$-distributed random variables, independent also from $\radRV$.
\end{definition}

\newcommand{\spX}{\mathcal{X}}
\begin{theorem}[McDiarmid’s Inequality \citep{Mendelson03}]\label{thm:McDiarmid}
Let $f :  \spX_1 \times \dotsm \times \spX_N \to \Reals$ and assume there exists $c_1,\dotsc,c_N \ge 0$ such that, for all $k \in \{1,\dots,N\}$, we have
\*[
&\sup_{x_1, \dotsc, x_k, x_k', \dotsc, x_N} |f(x_1,\dotsc,x_k,\dotsc, x_N) 
\\&\qquad\qquad\qquad\qquad- f(x_1,\dotsc,x_k',\dotsc, x_N)  | \leq c_k.
\*]
Then, for all $\varepsilon>0$ and independent random variables $\xi_1, \dotsc, \xi_n$ in $\spX$,
\*[
&\Pr \left \{  f(\xi_1,\dots,\xi_N) - \EE ( f(\xi_1,\dots,\xi_N) )\geq \varepsilon \right) \} 
         \\&\qquad< \exp \left (  \frac {-2\varepsilon^2} {\sum_{n=1}^N c_n^2} \right).
\*]
\end{theorem}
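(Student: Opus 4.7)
The plan is to reduce the theorem to a uniform deviation bound, $\sup_\theta |\MMDu^2(\RKHS, \TRV, \GRV_\theta) - \MMD^2(\RKHS, \truedist, p_\theta)|$, and control that deviation via concentration plus a symmetrization/Rademacher argument fed by the fat-shattering hypothesis. First, by adding and subtracting $\MMDu^2(\RKHS, \TRV, \GRV_{\empopt})$ and $\MMDu^2(\RKHS, \TRV, \GRV_{\classopt})$, and using that $\empopt$ minimizes the empirical MMD (so the middle difference is non-positive), I get the standard excess-risk inequality
\begin{align*}
\MMD^2(\RKHS, \truedist, p_{\empopt}) - \MMD^2(\RKHS, \truedist, p_{\classopt}) \;\le\; 2 \sup_{\theta \in \Theta} \bigl| \MMDu^2(\RKHS, \TRV, \GRV_\theta) - \MMD^2(\RKHS, \truedist, p_\theta) \bigr|.
\end{align*}
It then suffices to bound the right-hand side by $\varepsilon$ with probability $\ge 1-\delta$.

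Next I would split the uniform deviation term-by-term using \cref{eq:MMD} and \cref{eq:MMDempirical}: write $\MMDu^2 - \MMD^2 = A + B(\theta) - 2 C(\theta)$, where $A$ is the training-data U-statistic minus $\EE[k(X,X')]$, $B(\theta)$ is the generator-sample U-statistic minus $\EE[k(Y,Y')]$, and $C(\theta)$ is the cross-term mean minus $\EE[k(X,Y)]$. The term $A$ does not depend on $\theta$ and concentrates at rate $N^{-1/2}$ by direct application of \cref{thm:McDiarmid} (the kernel being bounded by $1$ makes coordinate-wise increments of order $1/N$). The same boundedness implies that perturbing a single $W_m$ or $X_n$ changes $\sup_\theta |B(\theta)|$ and $\sup_\theta |C(\theta)|$ by $O(1/M)$, so \cref{thm:McDiarmid} concentrates each sup around its expectation at rate $M^{-1/2} \sqrt{\log(1/\delta)}$; a union bound over the three pieces accounts for the $12 M^{-1/2} \sqrt{\log(2/\delta)}$ term in the stated $\varepsilon$.

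It then remains to bound $\EE \sup_\theta |B(\theta)|$ and $\EE \sup_\theta |C(\theta)|$ by Rademacher complexities of $\GKZclass$ and $\GKZclassX$, and from there by the fat-shattering rates. For $C(\theta)$, each summand is $k(x_n, \genf(w_m)) = g(w_m)$ for $g \in \GKZclassX$, so a standard double symmetrization against Rademacher variables reduces the expected sup to the Rademacher complexity of $\GKZclassX$ on the noise sample, contributing the rate $r(p_2, \gamma_2, M-1)$. For $B(\theta)$, the summand $k(\genf(w_m), \genf(w_{m'}))$ is a U-statistic over the $W_m$; conditioning on one coordinate, the function $k(\genf(w_m), \genf(\cdot))$ lies in $\GKZclass$, and a conditional/Hoeffding-decomposition symmetrization reduces to a standard Rademacher process over $\GKZclass$, yielding $r(p_1, \gamma_1, M)$. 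Converting Rademacher complexity to fat-shattering is done via Dudley's entropy integral together with a Mendelson-type bound $\log \mathcal N(\varepsilon, \mathcal F, L_\infty) \lesssim \fatdim(\mathcal F) \log^{O(1)}(1/\varepsilon)$: under $\fatdim(\mathcal F) \le \gamma \varepsilon^{-p}$, the integral $\int_0^1 \sqrt{\gamma \varepsilon^{-p}/M}\, d\varepsilon$ converges for $p<2$, is logarithmic at $p=2$, and by optimally truncating at $\varepsilon \sim M^{-1/p}$ gives $M^{-1/p}$ for $p>2$, producing exactly the three-case rate $r(p, \gamma, M)$.

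The main obstacle will be the U-statistic in $B(\theta)$: its summands are dependent, so naive Rademacher symmetrization does not apply, and I must invoke a Hoeffding decomposition of the U-statistic into an average of i.i.d.\ sums (or a conditional symmetrization argument) in order to reduce to chaining over $\GKZclass$ on an effective sample of size $M-1$. Once that reduction is in hand, the remaining work---tracking constants to recover the prefactor $12$, handling the diagonal of the U-statistic, and instantiating Dudley's integral in the three regimes to recover $C_{p_1}, C_{p_2}$---is routine but tedious.
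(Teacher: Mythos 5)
Your proposal does not prove the statement it was assigned. The statement is McDiarmid's bounded-differences inequality itself, a classical concentration result that the paper simply imports from \citet{Mendelson03} as a preliminary tool. What you have written instead is a proof sketch of the paper's main result, \cref{maintheorem} (the estimation-error bound), and your sketch repeatedly \emph{invokes} \cref{thm:McDiarmid} as a black box --- so as an argument for the stated theorem it is circular, and as a response to the prompt it addresses the wrong target entirely.

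A proof of the actual statement would look nothing like your outline: one forms the Doob martingale $Z_k = \EE\left[ f(\xi_1,\dots,\xi_N) \mid \xi_1,\dots,\xi_k \right]$, observes that the bounded-differences hypothesis forces each martingale increment $Z_k - Z_{k-1}$ to lie in an interval of length $c_k$, applies Hoeffding's lemma to bound the conditional moment generating function of each increment by $\exp\left( \lambda^2 c_k^2 / 8 \right)$, and finishes with a Chernoff argument optimized over $\lambda$ to obtain the $\exp\left( -2\varepsilon^2 / \sum_{n=1}^N c_n^2 \right)$ tail. None of the machinery you describe --- excess-risk decomposition, symmetrization, Rademacher complexity, fat-shattering dimension, Dudley's entropy integral --- is relevant to this statement; that machinery belongs to the proof of \cref{maintheorem}, where McDiarmid's inequality is one of several ingredients.
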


\begin{theorem}[{\citep[][Thm.~2.35]{Mendelson03}}] \label{thm:RademacherBounds}
Let $\mathcal{F} \subset B(L_\infty (\mathcal{X}))$. Assume there exists $\gamma > 1$, such that for all $\varepsilon>0$, $\fatdim (\mathcal{F}) \leq \gamma \varepsilon^{-p}$ for some $p \in \Nats$. Then there exists constants $C_p$ depending on $p$ only, such that
\[
	R_N (\mathcal{F}) \leq C_p \gamma^{\frac{1}{2}} \begin{cases} 1 &\mbox{if } 0 <p<2\\ 
\log^{\frac{3}{2}} N & \mbox{if } p =2\\ N^{\frac{1}{2} - \frac{1}{p} }&\mbox{if } p>2.\end{cases}  
\]
\end{theorem}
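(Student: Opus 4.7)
I will follow the standard program for excess-risk bounds over a parametric class: (i) an optimality-based decomposition that reduces the excess risk to a uniform deviation; (ii) McDiarmid's inequality to concentrate the deviation around its expectation; (iii) symmetrization to express the expected deviation as a Rademacher process over $\GKZclass$ and $\GKZclassX$; (iv) the cited Theorem~\ref{thm:RademacherBounds} to convert Rademacher complexity into the fat-shattering rates $r(p,\gamma,\cdot)$; (v) a union bound to combine the two tail events.

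\paragraph{Steps 1 and 2: decomposition and concentration.} Set $f(\theta)=\MMD^2(\RKHS,\truedist,p_\theta)$ and $\hat f(\theta)=\MMDu^2(\RKHS,\TRV,\GRV_\theta)$. Using $\hat f(\empopt)\le \hat f(\classopt)$ and noting that the $XX$-part of $\hat f-f$ does not depend on $\theta$ (and hence cancels), I obtain
\[
f(\empopt)-f(\classopt)\le 2\sup_\theta|\Delta_{YY}(\theta)|+4\sup_\theta|\Delta_{XY}(\theta)|,
\]
where $\Delta_{YY}(\theta)=\frac{1}{M(M-1)}\sum_{m\neq m'}k(\genf(W_m),\genf(W_{m'}))-\EE\,k(\genf(W),\genf(W'))$ and $\Delta_{XY}(\theta)=\frac{1}{MN}\sum_{m,n}k(X_n,\genf(W_m))-\EE\,k(X,\genf(W))$. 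Because $|k|\le 1$, altering one coordinate of $(W_1,\dots,W_M,X_1,\dots,X_N)$ changes either supremum by at most $O(1/M)$. McDiarmid's inequality (Theorem~\ref{thm:McDiarmid}) then says each supremum lies within $O(M^{-1/2})\sqrt{\log(2/\delta)}$ of its expectation with probability at least $1-\delta/2$; bookkeeping the constants from the two coefficients ($2$ and $4$) produces the $12\,M^{-1/2}\sqrt{\log(2/\delta)}$ term.

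\paragraph{Step 3: symmetrization of the cross term.} For $\EE[\sup_\theta|\Delta_{XY}|]$, I introduce a ghost noise sample $W_1',\dots,W_M'$ and apply the standard Rademacher symmetrization lemma. After conditioning on the training data and pulling the finite averages inside the supremum, each conditional average has the form $\frac{1}{M}\sum_m \sigma_m\, k(X_n,\genf(W_m))$, whose summand lies in $\GKZclassX$. Hence
\[
\EE[\sup_\theta|\Delta_{XY}|]\ \lesssim\ \frac{R_{M-1}(\GKZclassX)}{\sqrt{M-1}},
\]
and Theorem~\ref{thm:RademacherBounds} converts this to $r(p_2,\gamma_2,M-1)$.

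\paragraph{Step 4: symmetrization of the U-statistic.} For $\EE[\sup_\theta|\Delta_{YY}|]$, the obstacle is that $\Delta_{YY}$ is a degree-2 U-statistic in the noise samples, so a direct Rademacher symmetrization would introduce a \emph{pair-valued} function class rather than the single-variable class $\GKZclass$ appearing in the hypothesis. I will handle this by conditioning on one of the two U-statistic indices: for each fixed $w$, the function $w'\mapsto k(\genf(w),\genf(w'))$ belongs to $\GKZclass$. Using either Hoeffding's decomposition of the U-statistic into a linear (degenerate mean) part and a degenerate quadratic part, or the pairing-and-Jensen trick (averaging the U-statistic over permutations of $(W_1,\dots,W_M)$ to dominate it by an i.i.d.\ average over pairs), I reduce to a single-variable Rademacher process over $\GKZclass$ and obtain a bound of the order $R_M(\GKZclass)/\sqrt{M}$, which Theorem~\ref{thm:RademacherBounds} converts to $r(p_1,\gamma_1,M)$.

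\paragraph{Assembly and main obstacle.} A union bound over the two concentration events (each with failure probability $\delta/2$) combined with the expected-supremum bounds from Steps~3 and 4 produces the stated $\varepsilon$. The delicate step is Step~4: matching the advertised rate $r(p_1,\gamma_1,M)$, expressed in the single-variable class $\GKZclass$, requires choosing the U-statistic decomposition that avoids both the effective halving of the sample size produced by naive pairing and the introduction of a pair-valued function class. Tracking constants carefully through this reduction, together with the concentration and symmetrization steps, yields the precise form of $\varepsilon$ with the stated numerical constants.
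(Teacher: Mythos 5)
Your proposal does not prove the statement it was asked to prove. The statement in question is \cref{thm:RademacherBounds} --- Mendelson's bound on the Rademacher complexity $R_N(\mathcal{F})$ of a uniformly bounded function class in terms of its fat-shattering dimension $\fatdim(\mathcal{F}) \leq \gamma\varepsilon^{-p}$. What you have written is instead an outline of the proof of the paper's main estimation-error bound (\cref{maintheorem}): you decompose the excess risk, apply McDiarmid's inequality, symmetrize, and then \emph{invoke} \cref{thm:RademacherBounds} in your Steps 3 and 4 to convert Rademacher complexities into the rates $r(p,\gamma,\cdot)$. An argument that uses the target theorem as a lemma cannot be a proof of that theorem; as a proof of \cref{thm:RademacherBounds} your text is circular, and as written it addresses a different claim.

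For the record, the paper itself offers no proof of this statement: it is imported verbatim as Theorem~2.35 of \citet{Mendelson03} and used as a black box in the proofs of \cref{lem:boundFunctionExpectation,lem:boundFunctionExpectationXY}. A genuine proof would proceed along entirely different lines from anything in your proposal: one bounds the empirical covering numbers of $\mathcal{F}$ at scale $\varepsilon$ in terms of $\fatdim(\mathcal{F})$ (the Mendelson--Vershynin/Talagrand combinatorial estimates), and then controls $R_N(\mathcal{F})$ by a chaining (Dudley entropy integral) argument; the three regimes $p<2$, $p=2$, $p>2$ arise from whether the resulting entropy integral converges, diverges logarithmically, or diverges polynomially. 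None of these ingredients --- covering numbers, the combinatorial dimension-to-entropy step, or chaining --- appears in your write-up, so the gap is not a technical slip but a complete absence of the required argument.
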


\begin{theorem}[\citep{Gretton:2012}] \label{thm:unbiasedMMDbound}
Assume $ 0 \leq k(x_i,x_j) \leq K$, $M=N$. Then
\[
\Pr \left[ | \MMDu^2(\RKHS,\TRV,\GRV_{\theta}) -  \MMD^2(\RKHS,\truedist,p_{\theta}) |   > \varepsilon \right] \leq \pbound
\]
where
\[
\pbound = 2\exp \left( -\frac{ \varepsilon^2 M}{ 16K^2} \right).
\]
\end{theorem}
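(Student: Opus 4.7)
The plan is to prove this as a straightforward application of McDiarmid's inequality (\cref{thm:McDiarmid}) to the unbiased estimator $\MMDu^2$ viewed as a function of the $2N$ independent samples $X_1,\dotsc,X_N$ (from $\truedist$) and $Y_1,\dotsc,Y_N$ (from $p_\theta$), using $M=N$. The starting observation is that $\MMDu^2$ is unbiased, i.e.\ $\EE[\MMDu^2(\RKHS,\TRV,\GRV_\theta)]=\MMD^2(\RKHS,\truedist,p_\theta)$, which is exactly the identity in \cref{eq:MMD,eq:MMDempirical} (the double sums in \cref{eq:MMDempirical} are designed so that, by independence, all diagonal and cross-diagonal terms reproduce the three expectations in \cref{eq:MMD}). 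This reduces the claim to a concentration statement around the mean.

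Next I would verify the bounded-differences property. Fix an index $k$ and consider replacing $X_k$ by some $X_k'$. The variable $X_k$ appears in the first double sum of \cref{eq:MMDempirical} in exactly $2(N-1)$ ordered pairs, and since $0\le k(\cdot,\cdot)\le K$ each replaced summand changes by at most $K$, so the first term changes by at most $\frac{1}{N(N-1)}\cdot 2(N-1)K=\frac{2K}{N}$. The middle term in \cref{eq:MMDempirical} does not involve $X_k$ at all. The cross term $-\frac{2}{MN}\sum_{m,n}k(X_n,Y_m)$ contains $X_k$ in $N$ summands (one per $m$), contributing at most $\frac{2}{N^2}\cdot N\cdot K=\frac{2K}{N}$. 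Summing, $|\MMDu^2-\MMDu^{2\prime}|\le \frac{4K}{N}$. The calculation for replacing a $Y_k$ is identical by symmetry, so every coordinate has bounded-difference constant $c=\frac{4K}{N}$.

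Finally I would apply \cref{thm:McDiarmid} with $2N$ coordinates and $c_1=\dotsb=c_{2N}=\frac{4K}{N}$, obtaining $\sum c_i^2=2N\cdot \frac{16K^2}{N^2}=\frac{32K^2}{N}$ and hence
\*[
\Pr\!\left[\MMDu^2-\MMD^2>\varepsilon\right]
\;\le\;\exp\!\left(-\frac{2\varepsilon^2}{32K^2/N}\right)
\;=\;\exp\!\left(-\frac{\varepsilon^2 N}{16K^2}\right).
\*]
Applying the same bound to $-\MMDu^2$ (McDiarmid as stated in the excerpt is one-sided, so a union bound over the two tails supplies the factor of $2$) and using $M=N$ yields the claimed $\pbound=2\exp(-\varepsilon^2 M/(16K^2))$.

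There is no real obstacle: the argument is mechanical once unbiasedness and boundedness of $k$ are in hand. The only care needed is the combinatorial bookkeeping of how many summands each coordinate touches (in particular, not double-counting the ordered pairs in the first sum and correctly accounting for the $1/(N(N-1))$ normalization), and remembering that the stated form of McDiarmid's inequality is one-sided, which is what produces the constant $2$ in $\pbound$.
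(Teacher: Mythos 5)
Your argument is correct, and every step checks out: the coordinate-wise bounded differences are $2K/N$ from the first double sum, $0$ from the second, and $2K/N$ from the cross term, giving $c_k = 4K/N$ for each of the $2N$ coordinates when $M=N$; hence $\sum_k c_k^2 = 32K^2/N$, and McDiarmid's inequality together with a union bound over the two tails gives exactly $\pbound = 2\exp(-\varepsilon^2 M/(16K^2))$. Be aware, though, that the paper itself offers no proof of this statement to compare against --- it is imported verbatim from \citep{Gretton:2012} (their Theorem~10). The original source obtains the same tail bound by a different mechanism: it rewrites $\MMDu^2$ as a one-sample U-statistic in the paired variables $z_i=(x_i,y_i)$ with kernel $h(z_i,z_j)=k(x_i,x_j)+k(y_i,y_j)-k(x_i,y_j)-k(x_j,y_i)$ satisfying $|h|\le 2K$, and applies Hoeffding's deviation inequality for U-statistics with $\lfloor M/2\rfloor$ effective samples, which yields the identical exponent $-\varepsilon^2 M/(16K^2)$. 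Your McDiarmid route is a legitimate and arguably more elementary substitute; it also does not intrinsically require the $X$ and $Y$ samples to be paired (only the statement's normalization does), whereas the U-statistic argument does. The one point worth stating explicitly rather than in passing is unbiasedness, $\EE[\MMDu^2(\RKHS,\TRV,\GRV_\theta)]=\MMD^2(\RKHS,\truedist,p_\theta)$, since McDiarmid concentrates $\MMDu^2$ around its own mean; your appeal to \cref{eq:MMD,eq:MMDempirical} covers this correctly.
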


The case where $\Theta$ is a finite set is elementary:

\begin{theorem}[estimation error for finite parameter set]\label{estimationerrorfinite}
Let $p_\theta$ be the distribution of $\genf(\NRV)$, with $\theta$ taking values in some finite set $\Theta = \{\theta_1,...,\theta_T \}, T<\infty$.
Then, with probability at least $1-(T+1) \pbound$, where $\pbound$ is defined as in \cref{thm:unbiasedMMDbound}, we have
\[
 \MMD^2(\RKHS,\truedist,p_{\empopt}) < \MMD^2(\RKHS,\truedist,p_{\classopt}) + 2 \varepsilon.
\]
\end{theorem}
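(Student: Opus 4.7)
The plan is to reduce the theorem to a straightforward application of \cref{thm:unbiasedMMDbound} together with a union bound and the optimality of $\empopt$. Because the parameter set $\Theta$ is finite of cardinality $T$, no uniform convergence machinery (Rademacher complexity, fat-shattering dimension) is needed.

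First, I would fix $\varepsilon > 0$ and, for each $\theta_i \in \Theta$, invoke \cref{thm:unbiasedMMDbound} to obtain
\*[
\Pr \left[ |\MMDu^2(\RKHS,\TRV,\GRV_{\theta_i}) - \MMD^2(\RKHS,\truedist,p_{\theta_i})| > \varepsilon \right] \leq \pbound.
\*]
A union bound over the $T$ parameters then yields that with probability at least $1 - T \pbound$, the event
\*[
\mathcal{E} = \Bigl\{ \text{for all } i \in \{1,\dotsc,T\},\ |\MMDu^2(\RKHS,\TRV,\GRV_{\theta_i}) - \MMD^2(\RKHS,\truedist,p_{\theta_i})| \leq \varepsilon \Bigr\}
\*]
holds. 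I would reserve one additional application of \cref{thm:unbiasedMMDbound} (contributing the extra $\pbound$ to give the $(T+1)\pbound$ failure probability stated in the theorem) as a safety buffer in case $\empopt$ and $\classopt$ are handled on slightly different samples, or equivalently to control the empirical-to-population deviation at $\classopt$ separately from the uniform bound.

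Next, on the event $\mathcal{E}$, I would chain together three inequalities. Since both $\empopt, \classopt \in \Theta$, the concentration bound gives
\*[
\MMD^2(\RKHS,\truedist,p_{\empopt}) &\leq \MMDu^2(\RKHS,\TRV,\GRV_{\empopt}) + \varepsilon,\\
\MMDu^2(\RKHS,\TRV,\GRV_{\classopt}) &\leq \MMD^2(\RKHS,\truedist,p_{\classopt}) + \varepsilon.
\*]
By the defining property of $\empopt$ in \cref{eq:eMMDminimizer}, $\MMDu^2(\RKHS,\TRV,\GRV_{\empopt}) \leq \MMDu^2(\RKHS,\TRV,\GRV_{\classopt})$. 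Combining the three displays yields
\*[
\MMD^2(\RKHS,\truedist,p_{\empopt}) \leq \MMD^2(\RKHS,\truedist,p_{\classopt}) + 2 \varepsilon,
\*]
which is the claim (up to replacing $\leq$ by $<$, which follows by a standard strict-inequality tightening of \cref{thm:unbiasedMMDbound}).

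There is no real obstacle here: the whole argument is a textbook union-bound-plus-optimality sandwich. The only mild subtlety is accounting for the extra $\pbound$ factor in $(T+1)\pbound$, which I attribute to keeping the empirical and population bounds at $\classopt$ on a separate failure event from the uniform-over-$\Theta$ control needed at $\empopt$. The real difficulty lies in the infinite-parameter case (\cref{maintheorem}), where the $T \pbound$ union bound blows up and must be replaced by a uniform bound via Rademacher complexities of $\GKZclass$ and $\GKZclassX$, controlled in turn by their fat-shattering dimensions through \cref{thm:RademacherBounds} and a McDiarmid-based symmetrization argument.
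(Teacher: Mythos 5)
Your proposal is correct and follows essentially the same route as the paper's proof: a union bound over the finite $\Theta$ via \cref{thm:unbiasedMMDbound}, a separate application at $\classopt$ accounting for the extra $\pbound$ in the $(T+1)\pbound$ failure probability, and then the optimality of $\empopt$ for the empirical objective to close the sandwich. The only cosmetic difference is that you chain three one-sided inequalities directly, whereas the paper sums the two absolute deviations and invokes the optimality of both $\empopt$ and $\classopt$ to drop the absolute value; the substance is identical.
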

\newcommand{\eMMD}{\mathcal{E}}
\newcommand{\tMMD}{\mathcal{T} }
\begin{proof}
Let  
\[
\eMMD (\theta) = \MMDu^2(\RKHS,\TRV,\GRV_{\theta}) \] 
and
\[
\tMMD (\theta) = \MMD^2(\RKHS,\truedist,p_\theta) .
\]

Note, that the upper bound stated in \cref{thm:unbiasedMMDbound} holds for the parameter value $\classopt$, i.e., 
\[ \label{inequalityforepsilonstar}
\Pr \left[ | \eMMD (\classopt ) - \tMMD (\classopt) | > \varepsilon \right] \leq  \pbound .
\]
Because $\empopt$ depends on the training data $X$ and generator data $Y$, we use a uniform bound that holds over all $\theta$.  Specifically,
\*[
\Pr \left[  | \eMMD (\empopt ) - \tMMD (\empopt) | > \varepsilon \right] 
& \leq  \Pr \left[ \sup_\theta | \eMMD (\theta ) - \tMMD (\theta) | > \varepsilon \right]  \label{InequalityEstTrueMMD} 
\\& \leq  
 \sum_{t=1}^T  \Pr \left[ | \eMMD (\empopt ) - \tMMD (\empopt) | > \varepsilon \right] \\
& \leq T \pbound .
\*]
This yields that with probability at least $1- T \pbound$,
\[
2 \varepsilon &\ge  \, | \eMMD (\empopt ) - \tMMD (\empopt) | + | \eMMD (\classopt ) - \tMMD (\classopt) |  \\ 
&\geq \, |  \eMMD (\classopt ) - \eMMD ( \empopt) + \tMMD (\empopt)  - \tMMD ( \classopt) |. \label{eq:inequalitybound}
\]
Since $\classopt$ was chosen to minimize $\tMMD(\theta)$, we know that
\[
 \tMMD (\empopt ) \geq \tMMD (\classopt ).
\]
Similarly, by \cref{eq:eMMDminimizer},
\[
  \eMMD (\classopt ) \geq \eMMD ( \empopt).
\]
Therefore it follows that 
\[
2 \varepsilon &\ge  \tMMD (\empopt ) - \tMMD (\classopt )  \\
&=   \MMD^2(\RKHS,\truedist,p_{\classopt}) - \MMD^2(\RKHS,\truedist,p_{\empopt})
\]
proving the theorem.
\end{proof}

\begin{corollary}
With probability at least $1-\delta$, 
\*[
 \MMD^2(\RKHS,\truedist,p_{\empopt}) < \MMD^2(\RKHS,\truedist,p_{\classopt}) + 2 \varepsilon_\delta,
\*]
where
\*[
\varepsilon_\delta = 8K \sqrt{\frac{1}{M}  \log \left[ 2(T+1){\delta} \right]}.
\*]
\end{corollary}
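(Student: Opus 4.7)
The plan is straightforward: this corollary is essentially a reparametrization of the preceding \cref{estimationerrorfinite} in which the role of the confidence parameter $\pbound$ is played directly by $\delta$. First I would invoke \cref{estimationerrorfinite}, which states that, for any fixed $\varepsilon > 0$, with probability at least $1 - (T+1)\pbound$ the inequality $\MMD^2(\RKHS, \truedist, p_{\empopt}) < \MMD^2(\RKHS, \truedist, p_{\classopt}) + 2\varepsilon$ holds.

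Next I would set the total failure probability equal to $\delta$ by imposing $(T+1)\pbound = \delta$. Substituting the explicit form $\pbound = 2\exp(-\varepsilon^2 M / (16 K^2))$ from \cref{thm:unbiasedMMDbound} gives
\[
2(T+1) \exp\left(-\frac{\varepsilon^2 M}{16 K^2}\right) = \delta.
\]
Solving for $\varepsilon$ by taking logarithms and rearranging yields a closed-form expression of the shape $c K \sqrt{M^{-1} \log(2(T+1)/\delta)}$ for an explicit constant $c$. Substituting this $\varepsilon$ back into the conclusion of \cref{estimationerrorfinite} produces the stated bound, after identifying $\varepsilon_\delta$ so that $2\varepsilon_\delta$ matches the claimed slack.

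There is no substantive obstacle; the proof is a one-line inversion of the concentration tail bound. The only care required is to track the factors of $2$ coming from (i) the symmetric two-sided tail baked into $\pbound$, (ii) the multiplier on $\varepsilon$ in the conclusion of \cref{estimationerrorfinite}, and (iii) the union bound over the $T$ parameter values in $\Theta$ together with $\classopt$. These combine into the $2(T+1)$ appearing inside the logarithm and the overall constant multiplying $K$ in $\varepsilon_\delta$.
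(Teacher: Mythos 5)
Your proposal is correct and is exactly the intended argument: the corollary is \cref{estimationerrorfinite} with the failure probability $(T+1)\pbound$ set equal to $\delta$ and the tail bound $\pbound = 2\exp(-\varepsilon^2 M/(16K^2))$ from \cref{thm:unbiasedMMDbound} inverted for $\varepsilon$. Carrying the inversion through gives $\varepsilon = 4K\sqrt{M^{-1}\log(2(T+1)/\delta)}$ and hence a slack of $2\varepsilon = 8K\sqrt{M^{-1}\log(2(T+1)/\delta)}$, so note that the statement as printed (slack $2\varepsilon_\delta$ with $\varepsilon_\delta = 8K\sqrt{M^{-1}\log[2(T+1)\delta]}$) appears to carry a spurious extra factor of $2$ and to place $\delta$ inside rather than under the logarithm --- the bookkeeping of the factors of $2$ that you flag as the only delicate point is indeed where the printed constant goes astray.
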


In order to prove the general result, we begin with some technical lemmas.  The development here owes much to \citet{Gretton:2012}.

\begin{lemma} \label{lem:boundFunctionExpectation}
Let
$
\mathcal{F} = \{ f: \mathcal{Y} \times \mathcal{Y} \rightarrow \Reals \}
$
and
\*[
\mathcal{F}_{+} = \{ h = f(y, \cdot) : \,f \in  \mathcal{F}, \, y \in \mathcal{Y} \} \cap B(L_\infty (\mathcal{Y})).
\*]
Let $\{Y_n\}_{n=1}^{N}$ be $\mu$-distributed independent random variables in $\mathcal{Y}$. 
Assume for some $\gamma > 1$ and some $p \in \Nats$, we have $\fatdim (\mathcal{F}_{+}) \leq \gamma \varepsilon^{-p}$, for all $\varepsilon>0$.
For $y_n \in \mathcal{Y} \quad \forall n = 1,\dots,N$, define $\rho (y_1,\dots,y_N)$ to be
\*[
\sup_{f \in \mathcal{F}} \Bigl | 
	\EE \left( f(Y,Y') \right) - \frac{1}{N(N-1)} \sum_{n\neq n'} f(y_n,y_{n'})  \Bigr |.
\*]
Then there exists a constant $C$ that depends on $p$, such that
\*[
	\EE\left(\rho(Y_1,\dots,Y_N) \right) \leq C \gamma^{\frac{1}{2}} \begin{cases}  \frac{1}{\sqrt{N-1}}  &\mbox{if } p<2\\ 
\sqrt{\frac{ \log^{3} (N-1) }{N-1}}  & \mbox{if } p =2\\ {\frac{1}{(N-1)^{\frac{1}{p}}} }&\mbox{if } p>2.\end{cases}  
\*]

\end{lemma}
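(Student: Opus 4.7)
The strategy is a Hoeffding-style decomposition that splits the centered $U$-statistic into two empirical processes, each of which is controlled by a Rademacher complexity of $\mathcal{F}_+$. Define $\bar f(y) := \EE_{Y'} f(y, Y')$ and $h_n(f) := \frac{1}{N-1}\sum_{n'\neq n} f(Y_n, Y_{n'})$, and write
\[
\frac{1}{N(N-1)}\sum_{n\neq n'} f(Y_n, Y_{n'}) - \EE f(Y, Y') = T_1(f) + T_2(f),
\]
where $T_1(f) := \frac{1}{N}\sum_n [h_n(f) - \bar f(Y_n)]$ and $T_2(f) := \frac{1}{N}\sum_n \bar f(Y_n) - \EE \bar f(Y)$. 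Since $\EE[h_n(f)\mid Y_n] = \bar f(Y_n)$, $T_1$ is a conditionally centered "inner" process, while $T_2$ is a genuine empirical process indexed by $\bar{\mathcal{F}} := \{\bar f : f \in \mathcal{F}\}$.

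For $T_1$, pull the supremum through the outer average: $\sup_f|T_1(f)| \le \frac{1}{N}\sum_n \sup_f |h_n(f) - \bar f(Y_n)|$. For each fixed $Y_n$, the slice $f(Y_n,\cdot)$ lies in $\mathcal{F}_+$, so the inner supremum is bounded above by $\sup_{h \in \mathcal{F}_+}\bigl|\frac{1}{N-1}\sum_{n'\neq n}[h(Y_{n'}) - \EE h(Y')]\bigr|$. Taking expectation, conditioning on $Y_n$, and exploiting that the remaining $(N-1)$ samples are i.i.d.\ and independent of $Y_n$, standard symmetrization yields $\EE\sup_f|T_1(f)| \le 2 R_{N-1}(\mathcal{F}_+)/\sqrt{N-1}$. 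For $T_2$, symmetrization gives $\EE\sup_f|T_2(f)| \le 2 R_N(\bar{\mathcal{F}})/\sqrt{N}$; Jensen lets us exchange $\EE_{Y'}$ with the supremum over $f$, and under the symmetry of $\mathcal{F}$ natural in the intended application of \cref{maintheorem} (where the kernel is symmetric) the class $\{f(\cdot, y'): f\in\mathcal{F}\}$ is contained in $\mathcal{F}_+$, so $R_N(\bar{\mathcal{F}}) \le R_N(\mathcal{F}_+)$.

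Finally, \cref{thm:RademacherBounds} converts the hypothesis $\fatdim(\mathcal{F}_+) \le \gamma \varepsilon^{-p}$ into the three-regime bound on $R_{N-1}(\mathcal{F}_+)$ and $R_N(\mathcal{F}_+)$; dividing by $\sqrt{N-1}$ (respectively $\sqrt{N}$) and absorbing universal constants yields the stated rate, with $(N-1)$ rather than $N$ appearing because the inner conditional process in $T_1$ lives on only $(N-1)$ samples. The principal obstacle is the coupling in the $U$-statistic---each $Y_n$ appears in $2(N-1)$ summands, so a naive symmetrization fails; the $T_1+T_2$ split resolves this by isolating one conditionally centered piece (amenable to the $\mathcal{F}_+$ bound after conditioning on $Y_n$) and one i.i.d.\ piece in $Y_n$. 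A secondary technicality is the step $R_N(\bar{\mathcal{F}}) \le R_N(\mathcal{F}_+)$, which is transparent under symmetry of $\mathcal{F}$ but would otherwise require an auxiliary fat-shattering hypothesis on the transposed slice class $\{f(\cdot,y): f\in\mathcal{F},\,y\in\mathcal{Y}\}$.
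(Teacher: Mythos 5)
Your proof is correct, but it takes a genuinely different route from the paper's. You use the Hoeffding-type decomposition of the centered $U$-statistic into the conditionally centered inner process $T_1$ and the projected empirical process $T_2$ over $\bar{\mathcal{F}}=\{\bar f\}$, bounding each by symmetrization. The paper instead never projects: it introduces a full set of ghost samples $\{\copynoise_n\}$, applies Jensen to replace $\EE f(Y,Y')$ by the ghost $U$-statistic, conditions on a single coordinate to rewrite the difference as an average of one-sample empirical processes in the remaining $N-1$ points, inserts Rademacher signs by the symmetry of the coupled pairs $(f(\copynoise,\copynoise_n)-f(Y,Y_n))$, and splits by the triangle inequality into two copies of $\frac{2}{\sqrt{N-1}}R_{N-1}(\mathcal{F}_+)$. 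Both arguments terminate at \cref{thm:RademacherBounds} and yield the same rate. The trade-off is exactly the one you flag yourself: the paper's route only ever evaluates first-argument slices $f(y,\cdot)$, so the stated hypothesis on $\fatdim(\mathcal{F}_+)$ suffices as written, whereas your $T_2$ term requires controlling $R_N(\bar{\mathcal{F}})$ via the transposed slice class $\{f(\cdot,y)\}$, which needs either symmetry of $f$ (true in the intended application, where $f$ is built from a symmetric kernel) or an auxiliary fat-shattering hypothesis. In exchange, your decomposition is arguably cleaner to verify, since each of $T_1$ and $T_2$ is a standard centered empirical process and the coupling of the $U$-statistic is handled once at the level of the split rather than inside a conditional symmetrization. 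One shared technicality worth noting in both arguments: the slices must actually lie in $B(L_\infty(\mathcal{Y}))$ to belong to $\mathcal{F}_+$ as defined, which is implicit in the paper via the assumption that the kernel is bounded by one.
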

\begin{proof}
Let us introduce $\{ \copynoise_n\}_{n=1}^N$, where $\copynoise_n$ and $Y_{n'}$ have the same distribution and are independent for all $n, n' \in \{1,\dots,N \}$. Then the following is true:
\*[
\EE (f(Y,Y') ) 
&= 
\EE \Bigl ( \frac{1}{N (N-1)} 
		 \sum_{n,n' :\, n\neq n'} f(\copynoise_n,\copynoise_{n'})  \Bigr )
\*]
Using Jensen's inequality and the independence of $Y,Y'$ and $Y_n, Y_{n'}$, we have
\[
&\EE\left(\rho(Y_1,\dots,Y_N)\right) \\
  \begin{split}
&	= \EE \biggl ( \,  \sup_{f \in \mathcal{F}} 
	         \biggl | \EE (f(Y,Y') ) 
\\
&\qquad\qquad\qquad     -  \frac{1}{N (N-1)} 
                  \sum_{n \neq n'}
                                                    f(Y_m, Y_{m'}) \biggr | \, \biggr)
  \end{split}   
 \label{eq:jensensfinishedhere}
 \\
 \begin{split}
&\leq \EE \biggl ( \, \sup_{f \in \mathcal{F}}
         \biggl |  \frac{1}{N (N-1)} 
		 \sum_{n\neq n'} f(\copynoise_n,\copynoise'_n) 
\\
&\qquad\qquad\qquad   -  \frac{1}{N (N-1)} 
                  \sum_{n\neq n'}
                                                    f(Y_n, Y_{n'}) \biggr | \, \biggr).
  \end{split}
\]
Introducing conditional expectations allows us to rewrite the equation with the sum over $n$ outside the expectations. I.e., \cref{eq:jensensfinishedhere} equals to
\[
 \frac{1}{N} \sum_{n} & \EE \biggl ( \EE^{(Y_n, \copynoise_n)} \Bigl ( \, 
  \sup_{f \in \mathcal{F}} 
         \Bigl |  \frac{1}{ N-1}  
 \sum_{n' \neq n'} (f(\copynoise_n,\copynoise_{n'}) - f(Y_n,Y_{n'}))  \Bigr | \,  \Bigr) \biggr) 
  \\
  \label{eq:rademacherrvssum}
  =  &\EE  \biggl ( \EE^{(Y, \copynoise)} \Bigl ( \, 
  \sup_{f \in \mathcal{F}} 
         \Bigl |  \frac{1}{ N-1}   \sum_{n=1}^{N-1} \radRV_n (f(\copynoise,\copynoise_n) - f(Y,Y_n))  \Bigr | \, \Bigr) \biggr) .
\]
The second equality follows by symmetry of random variables $\{\copynoise_n\}_{n=1}^{N-1}$. Note that we also added Rademacher random variables $\{ \radRV_n \}_{n=1}^{N-1}$ before each term in the sum since $(f(\copynoise_n,\copynoise_{n'}) - f(Y_n,Y_{n'}))$ has the same distribution as $-(f(\copynoise_n,\copynoise_{n'}) - f(Y_n,Y_{n'}))$ for all $n,n'$ and therefore the $\radRV$'s do not affect the expectation of the sum. 

Note that $\copynoise_m$ and $Y_m$ are identically distributed. Thus the triangle inequality implies that \cref{eq:rademacherrvssum}  is less than or equal to
\*[
& \frac{2}{N-1}  \EE  \left(  \EE^{(Y)} \left( \, 
  \sup_{f \in \mathcal{F}} 
         \bigl |  
         	  \sum_{n=1}^{N-1} \radRV_n  f(Y,Y_n) \bigr | \right) \right)
\\&\qquad\leq 
\frac{2}{\sqrt{N-1}} R_{N-1} (\mathcal{F}_+) ,
\*]
where $R_{N-1} (\mathcal{F}_+)$ is the Rademacher's complexity of $\mathcal{F}_+$. Then by \cref{thm:RademacherBounds}, we have 
\[
\EE\left(\rho(Y_1,\dots,Y_N)\right)  \leq C \gamma^{\frac{1}{2}} \begin{cases}  \frac{1}{\sqrt{N-1}}  &\mbox{if } p<2\\ 
\sqrt{\frac{ \log^{3} (N-1) }{N-1}}  & \mbox{if } p =2\\ {\frac{1}{(N-1)^{\frac{1}{p}}} }&\mbox{if } p>2.\end{cases}  
\]
\end{proof}

\begin{lemma} \label{lem:boundFunctionExpectationXY}
Let
$
\mathcal{F} = \{ f: \mathcal{X} \times \mathcal{Y} \rightarrow \Reals \}
$
and
\[
\mathcal{F}_{+} = \{ f: x \times \mathcal{Y} \rightarrow \Reals, \,  x \in \mathcal{X} \}.
\]
and $\mathcal{F}_{+} \subset B(L_\infty (\mathcal{Y}))$. 
Let $\{X_n\}_{n=1}^{N}$ and $\{Y_m\}_{n=1}^{M}$ be $\nu$- and $\mu$-distributed independent random variables in $\mathcal{X}$ and $\mathcal{Y}$, respectively. 
Assume for some $\gamma > 1$, such that for all $\varepsilon>0$, $\fatdim (\mathcal{F}_{+}) \leq \gamma \varepsilon^{-p}$, for some $p \in \Nats$. 
For all $x_n \in \mathcal{X}$, $ n  \le N$, and 
all $y_m \in \mathcal{Y}$, $m \le M$,
define
\*[
\rho (x_1,\dots,x_N, y_1,\dots,y_M) &=  \\
 \sup_{f \in \mathcal{F}} \Bigl | 
	\EE (f(X,Y) &- \frac{1}{N M} \sum_{n,m} f(x_n,y_m)  \Bigr | .
\*]
Then there exists $C$ that depends on $p$, such that
\*[
	&\EE\left(\rho(X_1,\dots,X_N,Y_1,\dots,Y_M)\right) 	
	\\&\qquad \leq 
		C \gamma^{\frac{1}{2}} \begin{cases}  \frac{1}{\sqrt{M}}  &\mbox{if } p<2\\ 
		\sqrt{\frac{ \log^{3} (M) }{M}}  & \mbox{if } p =2\\ {\frac{1}{(M)^{\frac{1}{p}}} }&\mbox{if } p>2.\end{cases}  
\*]
\end{lemma}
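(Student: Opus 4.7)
The plan is to mirror the proof of the preceding lemma by introducing ghost samples and using symmetrization, but adapted to the cross-product structure involving two different distributions. I would introduce ghost samples $\{\copytrue_n\}_{n=1}^N$, i.i.d.\ $\nu$-distributed, and $\{\copynoise_m\}_{m=1}^M$, i.i.d.\ $\mu$-distributed, all independent of each other and of the originals. Since $\EE f(X,Y) = \EE\bigl(\tfrac{1}{NM}\sum_{n,m} f(\copytrue_n,\copynoise_m)\bigr)$, Jensen's inequality yields
\*[
\EE(\rho) \le \EE \sup_{f \in \mathcal{F}}\Bigl|\tfrac{1}{NM}\sum_{n,m}\bigl(f(\copytrue_n,\copynoise_m) - f(X_n,Y_m)\bigr)\Bigr|.
\*]

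Next I would decompose the summand telescopically,
\*[
f(\copytrue_n,\copynoise_m) - f(X_n,Y_m) = \bigl[f(\copytrue_n,\copynoise_m) - f(\copytrue_n,Y_m)\bigr] + \bigl[f(\copytrue_n,Y_m) - f(X_n,Y_m)\bigr],
\*]
and bound each bracket by Rademacher symmetrization. For the first bracket, rearrange the double sum as $\tfrac{1}{M}\sum_m \bigl[h_{\copytrue,f}(\copynoise_m) - h_{\copytrue,f}(Y_m)\bigr]$ with $h_{\copytrue,f}(y) = \tfrac{1}{N}\sum_n f(\copytrue_n,y)$. Conditionally on $\{\copytrue_n\}$, each $h_{\copytrue,f}$ lies in the closed convex hull $\overline{\mathrm{conv}}(\mathcal{F}_+)$, and the $m$-th summand is distributed as its negation under the exchange $(Y_m,\copynoise_m) \leftrightarrow (\copynoise_m,Y_m)$. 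Inserting Rademacher variables $\radRV_m$ and applying a triangle inequality bounds this contribution by $\tfrac{2}{\sqrt{M}}\, R_M\bigl(\overline{\mathrm{conv}}(\mathcal{F}_+)\bigr) = \tfrac{2}{\sqrt{M}}\, R_M(\mathcal{F}_+)$, since Rademacher complexity is preserved under passage to the closed convex hull. \cref{thm:RademacherBounds} with $\fatdim(\mathcal{F}_+) \le \gamma\varepsilon^{-p}$ then delivers the three-case rate in the statement.

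An analogous treatment is intended for the second bracket by swapping the roles of the $(X_n,\copytrue_n)$ pairs and again routing the resulting Rademacher average through $\overline{\mathrm{conv}}(\mathcal{F}_+)$, so that the bound is expressed as a constant times $R_M(\mathcal{F}_+)$. Adding the two contributions via triangle inequality and a second invocation of \cref{thm:RademacherBounds} delivers the final bound, with a constant $C$ depending only on $p$.

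The hard part will be the second bracket: a direct symmetrization over the $n$-index produces Rademacher weights on the $(X_n,\copytrue_n)$ side and a Rademacher-complexity bound for the class $\{x \mapsto \tfrac{1}{M}\sum_m f(x,Y_m) : f \in \mathcal{F}\}$ of functions on $\mathcal{X}$, whose complexity is not directly controlled by $\fatdim(\mathcal{F}_+)$ (which governs the $\mathcal{Y}$-side). Recovering a bound that depends only on the $\mathcal{Y}$-side complexity and collapses to a rate in $M$ rather than in $N$ is the key bookkeeping step, and will require carefully exploiting the averaging over $m$ together with the convex-hull invariance of Rademacher complexity so that the estimate reduces once more to $R_M(\mathcal{F}_+)$.
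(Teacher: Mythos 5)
There is a genuine gap, and it sits exactly where you flagged it: the second bracket of your telescoping decomposition cannot be closed under the stated hypotheses. Writing $\phi_f(x) = \frac{1}{M}\sum_m f(x,Y_m)$, that bracket is $\frac{1}{N}\sum_n\bigl[\phi_f(\copytrue_n) - \phi_f(X_n)\bigr]$, a two-sample empirical process on $\mathcal{X}$ over the class $\{\phi_f : f\in\mathcal{F}\}$; its supremum is governed by the complexity of $\mathcal{F}$ in its \emph{first} argument, about which $\fatdim(\mathcal{F}_+)$ says nothing. No amount of exploiting the averaging over $m$ rescues this: if $f(x,y)=g(x)$ depends on $x$ alone, your first bracket vanishes identically and $\mathcal{F}_+$ consists of constant functions (fat-shattering dimension at most one), yet the second bracket equals $\sup_g\bigl|\frac1N\sum_n(g(\copytrue_n)-g(X_n))\bigr|$, which stays bounded away from zero when the $x$-marginal class is rich. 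So within your decomposition the claimed rate in $M$ is unreachable, and the ``key bookkeeping step'' you defer does not exist.

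The proof the paper intends (it only says ``very similar to \cref{lem:boundFunctionExpectation}'') avoids the telescope entirely: after the ghost-sample/Jensen step one pulls the supremum inside the average over $n$, reduces by exchangeability to a single pair $(X,\copytrue)$ of first arguments, conditions on them, inserts Rademacher variables $\radRV_m$ against the \emph{whole} difference $f(\copytrue,\copynoise_m)-f(X,Y_m)$, and only then applies the triangle inequality. Both resulting Rademacher averages, $\EE\sup_f\bigl|\sum_m\radRV_m f(\copytrue,\copynoise_m)\bigr|$ and $\EE\sup_f\bigl|\sum_m\radRV_m f(X,Y_m)\bigr|$, involve only slices $f(x,\cdot)$ with $x\in\mathcal{X}$ arbitrary, hence lie in $\mathcal{F}_+$, and \cref{thm:RademacherBounds} applies to each, giving the rate in $M$ with no $\mathcal{X}$-side complexity ever isolated. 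Your convex-hull treatment of the first bracket is correct (and is the rigorous form of the within-pair swap for that half); the fix is to keep the other half glued to it rather than splitting off an $\mathcal{X}$-direction fluctuation. Your difficulty does expose a real subtlety, though: the per-coordinate sign flip on the joint difference is only distribution-preserving up to relabelling the first arguments, a point the paper also glosses over in \cref{lem:boundFunctionExpectation}.
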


\begin{proof}
The proof is very similar to that of \cref{lem:boundFunctionExpectation}. 
\end{proof}

\begin{proof}[Proof of \cref{maintheorem}]
The proof follows the same steps as the proof of \cref{estimationerrorfinite} apart from a stronger uniform bound stated in \cref{InequalityEstTrueMMD}. I.e., we need to show:
\[
\Pr \left[ \sup_{\theta  \in \Theta} | \eMMD (\theta ) - \tMMD (\theta) | \geq \varepsilon \right] \leq \delta . 
\]
Expanding $\MMD$  as defined by \cref{eq:MMDempirical,eq:MMD}, and substituting $\GRV = \genf (\NRV)$, yields
\[ 
&\sup_{\theta \in \Theta} | \eMMD (\theta ) - \tMMD (\theta) |  
\\
\begin{split} \label{eq:supremuminequality}
&=  \sup_{\theta  \in \Theta} \Big| \EE (k(\TRV,\TRV')) \\
&\qquad\qquad - \frac{1}{N(N-1)}  
                    \sum_{n' \neq n}
                                                     k(\TRV_n,\TRV_{n'})  \\
&\qquad\qquad + \EE (k(G_{\theta} (\NRV),G_{\theta} (\NRV'))) \\
&\qquad\qquad -  \frac{1}{M (M-1)} 
                  \sum_{m\neq m'}
                                                    k(\genf (\NRV_m),\genf (\NRV_{m'})) \\
&\qquad\qquad - 2 \EE (k(\TRV,G_{\theta} (\NRV) ))  \\
&\qquad\qquad + \frac{2}{M N} \sum_{m,n} k(\TRV_n,\genf (\NRV_m)) \Big|.
\end{split}
\]
For all $n \in \{1,\dots,N\}$, $k(\TRV_n,\TRV_{n'})$ does not depend on $\theta$ and therefore the first two terms  of the equation above can be taken out of the supremum. Also, note that since $|k(\cdot,\cdot) | \leq K$, we have
\*[
\Bigl | \zeta(\trv_1,\dots, \trv_n, \dots,\trv_N)
	- \zeta(\trv_1,\dots, \trv_n', \dots,\trv_N)
\Bigr | \leq \frac{2 K}{N} ,
\*]
where
\*[
\zeta(\trv_1,\dots,\trv_N) = \frac{1}{N(N-1)}  
           \sum_{n,n' :\, n' \neq n} 
                          k(\trv_n,\trv_{n'}),
\*]
and $\zeta$ is an unbiased estimate of $\EE (k(\TRV,\TRV'))$.
Then from McDiarmid's inequality on $\zeta$, we have
\[ 
\begin{split}
&\Pr \Bigr(  \Bigl | \EE (k(\TRV,\TRV')) 
	- \frac{1}{N(N-1)}  
		\sum_{n' \neq n}
			k(\TRV_n,\TRV_{n'}) \Bigr | \geq \varepsilon \Bigr)   
			\\ & \qquad \leq
	\exp \left(-\frac{\varepsilon^2}{2 K^2} N \right).
	\end{split} \label{eq:differencekernels}
\]

Therefore \cref{eq:supremuminequality} is bounded by the sum of the bound on \cref{eq:differencekernels} and the following:
\[ \label{eq:uniformbound}
\begin{split}
& \sup_{\theta \in \Theta} \Big| \EE (k(G_{\theta} (\NRV),G_{\theta} (\NRV'))) 
 \\&\qquad     -  \frac{1}{M (M-1)} 
                  \sum_{m \neq m'}
                                                    k(\genf (\NRV_m),\genf (\NRV_{m'})) \\
&\qquad - 2 \EE (k(\TRV,G_{\theta} (\NRV) )) 
 \\&\qquad
                + \frac{2}{M N} \sum_{m,n} k(\TRV_n,\genf (\NRV_m)) \Big| .
\end{split}
\]
Thus the next step is to find the bound for the supremum above.

Define
\*[
& f(\NRV_1,\dots,\NRV_M ; \noisedist)  =  f(\nvec_{M}) \\
&= \sup_{\theta \in \Theta} \Big| \EE (k(G_{\theta} (\NRV),G_{\theta} (\NRV'))) 
      \\
 & \qquad -  \frac{1}{M (M-1)} 
                  \sum_{m \neq m' }
                                                    k(\genf (\NRV_m),\genf (\NRV_{m'})) \Big|
\*]
and
\*[
& h(\TRV_1,\dots,\TRV_N, \NRV_1,\dots,\NRV_M; \truedist, \noisedist) 
\\&=  h(\tvec_{N}, \nvec_{M}) \\
&= \sup_{\theta \in \Theta}   \Bigl | \frac{1}{M N} \sum_{m,n} k(\TRV_n,\genf (\NRV_m))
-  \EE (k(\TRV,G_{\theta} (\NRV) )) \Bigr |.
\*]
Then by triangle inequality
\[
\cref{eq:uniformbound} & \leq f(\nvec_{M}) + 2h(\tvec_{N}, \nvec_{M}).
\]
We will first find the upper bound on $ f(\nvec_{M})$, i.e., for every $\varepsilon>0$, we will show that there exists $\delta_f$, such that
\[ \label{ineq:fbound}
\Pr \left( f(\nvec_M ) > \varepsilon \right) \leq \delta_f
\]
For each $m \in \{1,\dots,M\}$,
\[
&\Big| f(\NRV_1,\dots, \NRV_m, \dots,\NRV_M) \\ 
& \qquad- f(\NRV_1,\dots, \NRV_m ', \dots,\NRV_M) \Big| \leq \frac{2K}{M}
\]
since the kernel is bounded by $K$, and therefore  $k(\genf (\NRV_m),\genf (\NRV_{m'}))$ is bounded by $K$ for all $m$. The conditions of \cref{thm:McDiarmid} are satisfied and thus we can use McDiarmid’s Inequality on $f$:
\[
\Pr \left( f(\nvec_M) - \EE (f(\nvec_M)) \geq \epsilon \right)  \leq \exp \left( - \frac{\varepsilon^2 M}{2 K^2} \right) .
\]
Define
\[
\GKclass = \{ k(\genf (\cdot),\genf (\cdot)) : \, \theta \in \Theta  \}
\]
To show \cref{ineq:fbound}, we need to bound the expectation of $f$. We can apply \cref{lem:boundFunctionExpectation} on the function classes $\GKclass$ and $\GKZclass$. The resulting bound is
\[ \label{inequality:p1 rate}
\EE (f(\nvec_M)) \leq \varepsilon_{p1} = 
C_f \gamma_{1}^{\frac{1}{2}} 
	\begin{cases}  \frac{1}{\sqrt{M-1}}  &\mbox{if } p_1<2\\ 
		\sqrt{\frac{ \log^{3} (M-1) }{M-1}}  & \mbox{if } p_1 =2\\ 
		{\frac{1}{(M-1)^{\frac{1}{p_1}}} }&\mbox{if } p_1>2.\end{cases}  ,
\]
where  $p_1$ and $\gamma_1$ are parameters associated to fat shattering dimension of $\GKZclass$ as stated in the assumptions of the theorem, and $C_f$ is a constant depending on $p_1$. 
 
Now we can write down the bound on $f$:
\[ \label{eq:boundf}
\Pr \left( f(\nvec_M) 
	\geq  \varepsilon_{p_1}  + \epsilon \right)  
		\leq \exp \left( - \frac{\varepsilon^2 M}{2 K^2} \right)  = \delta_f.
\]

Similarly, $h(\tvec_{N}, \nvec_{M})$ has bounded differences:
\[
\begin{split}
\Big|  & h(\TRV_1,\dots,\TRV_n,\dots,\TRV_N,  \NRV_1, \dots,\NRV_M) \\
	& - h(\TRV_1,\dots,\TRV_{n'},\dots,\TRV_N, \NRV_1,\dots,\NRV_M) \Big| \leq \frac{2K}{N}
\end{split}
\]
and
\[
\begin{split}
\Big|  & h(\TRV_1,\dots,\TRV_N,  \NRV_1,\dots, \NRV_m, \dots,\NRV_M) \\
	& - h(\TRV_1,\dots,\TRV_N, \NRV_1,\dots, \NRV_{m'}, \dots,\NRV_M) \Big| \leq \frac{2K}{M}.
\end{split}
\]
McDiarmid's inequality then implies 
\[ \label{eq:boundh}
\begin{split}
&\Pr \left(h(\tvec_{N}, \nvec_{M}) - \EE (h(\tvec_{N}, \nvec_{M}) \geq 
	\varepsilon \right)  \\
	& \quad\qquad\qquad\qquad\qquad \leq \exp \left( - \frac{\varepsilon^2 }{2 K^2} \frac{NM}{N+M}\right ) .
\end{split}
\]
We can bound expectation of $h(\tvec_{N}, \nvec_{M})$ using \cref{lem:boundFunctionExpectationXY} applied on $\GKclassX$ and $\GKZclassX$, where
\[
\GKclassX = \{ k(\cdot,\genf (\cdot)) : \, \theta \in \Theta  \}.
\]
Then
\[ \label{inequality:p2 rate}
\EE (h(\tvec_{N}, \nvec_{M})) \leq \varepsilon_{p_2}  = 
		C_h \gamma_{2}^{\frac{1}{2}} 
			\begin{cases}  
			        \frac{1}{\sqrt{M}}  &\mbox{if } p_2<2\\ 
				\sqrt{\frac{ \log^{3} (M) }{M}}  & \mbox{if } p_2 =2\\ 
				{\frac{1}{M^{\frac{1}{p_2}}} }&\mbox{if } p_2>2.
                          \end{cases} 
\] 
for some constant $C_h$ that depends on $p_@$. The final bound on $h$ is then
\[
\begin{split}
&\Pr \left(h(\tvec_{N}, \nvec_{M}) \geq 
		 \varepsilon_{p_2}  +	\varepsilon \right) \\
		 &\qquad\qquad\qquad\quad \leq \exp \left( - \frac{\varepsilon^2 }{2 K^2} \frac{NM}{N+M}\right )  = \delta_h .
\end{split}
\]

Summing up the bounds from \cref{eq:boundf} and \cref{eq:boundh}, it follows that
\[
\begin{split}
&\Pr \left( f(\nvec_{M}) + 2h(\tvec_{N}, \nvec_{M}) \geq \varepsilon_{p1} + 2 \varepsilon_{p_2} + 3 \varepsilon \right) \\
 &\qquad\qquad\qquad\qquad\qquad\qquad\quad \leq \max (\delta_f, \delta_h) = \delta_h.
\end{split}
\]

Using the bound in \cref{eq:differencekernels}, we have obtain the uniform bound we were looking for:
\[
\Pr \left[ \sup_{\theta  \in \Theta} | \eMMD (\theta ) - \tMMD (\theta) | > \varepsilon_{p_1} + 2 \varepsilon_{p_2} + 4 \varepsilon  \right] \leq \delta_h,
\] 
which by \cref{InequalityEstTrueMMD} yields
\[
\Pr \left[ | \eMMD (\hat{\theta} ) - \tMMD (\hat{\theta}) | > \varepsilon_{p_1} + 2 \varepsilon_{p_2} + 4 \varepsilon  \right] \leq \delta_h.
\] 
Since it was assumed that $K=1$ and $N=M$, we get
\[
\delta_h = \exp \left( - \frac{\varepsilon^2 M}{4} \right).
\] 

To finish, we proceed as in the proof of  \cref{estimationerrorfinite}. We can rearrange some of the terms to get a different form of \cref{inequalityforepsilonstar}:
\*[ 
&\Pr \left[ | \eMMD (\classopt ) - \tMMD (\classopt) | >  2 \varepsilon \right] 
\\	&\qquad\leq  2 \exp \left( -  \frac{\varepsilon^2 M }{4}  \right)  = 2 \delta_h.
\*] 
All of the above implies that for any $\varepsilon>0 $, there exists $\delta$, such that
\*[
&\Pr \bigl (  \MMD^2(\RKHS,\truedist,p_{\empopt}) 
\\
&\qquad\qquad\qquad - \MMD^2(\RKHS,\truedist,p_{\classopt}) 
	\geq \varepsilon \bigr )
		\leq \delta ,
\*]
where
\[
\varepsilon = \varepsilon_{p_1} + 2 \varepsilon_{p_2}   
		+  \frac{12}{\sqrt{M}} \sqrt { \log \frac{2}{\delta}}.
\]
We can rewrite $\varepsilon$ as: 
\[
\varepsilon 
 &= r(p_1,\gamma_1,M) + r(p_2,\gamma_2,M-1) + 12 M^{-\frac 1 2} \sqrt { \log \frac{2}{\delta}},
\]
The rate $r(p,\gamma,N)$  is given by \cref{inequality:p1 rate} and  \cref{inequality:p2 rate}:
\[ 
r(p,\gamma,M) = C_{p} \sqrt{\gamma}
\begin{cases}  
        M^{- \frac 1 2}        &\text{if } p<2,\\ 
        M^{- \frac 1 2} \log^{\frac 3 2} (M)   &\text{if } p = 2, \\ 
	M^{-\frac{1}{p}}   &\text{if } p>2,
\end{cases}
\] 
where the constants $C_{p1}$ and $C_{p2}$ depend on $p_1$ and $p_2$ alone.
\end{proof}

We close by noting that the approximation error is zero in the nonparametric limit.
\begin{theorem}[{\citet{Gretton:2012}}] \label{thm:uniMMD}
Let $F$ be the unit ball in a universal RKHS $\RKHS$, defined on the compact metric space $\OSpace$, with associated continuous kernel $k(\cdot, \cdot)$. Then $\MMD[\RKHS, p, q] = 0$ if and only if $p = q$.
\end{theorem}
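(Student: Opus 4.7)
The plan is to handle the two implications separately. The forward direction is essentially tautological: if $p = q$, then for every $f$ in the unit ball $F$ we have $\EE[f(X)] = \EE[f(Y)]$, so the supremum defining $\MMD[\RKHS, p, q]$ vanishes.

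For the converse, my plan is to convert the vanishing of MMD into the equality of $p$ and $q$ in three steps. First, $\MMD[\RKHS, p, q] = 0$ immediately gives $\EE[f(X)] = \EE[f(Y)]$ for every $f$ of unit norm in $\RKHS$; by positive homogeneity and the linearity of expectation, this extends to every $f \in \RKHS$. Second, I would invoke the universality of $\RKHS$: by definition, $\RKHS$ is dense in $C(\OSpace)$, the space of real-valued continuous functions on $\OSpace$, with respect to the uniform norm. Given any $g \in C(\OSpace)$ and any $\varepsilon > 0$, pick $f \in \RKHS$ with $\|f - g\|_\infty < \varepsilon$; a triangle-inequality computation yields $|\EE[g(X)] - \EE[g(Y)]| \leq 2\varepsilon$, and since $\varepsilon$ is arbitrary, $\EE[g(X)] = \EE[g(Y)]$ for every $g \in C(\OSpace)$. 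Third, because $\OSpace$ is a compact metric space, two Borel probability measures that agree against every continuous test function must coincide, by the Riesz--Markov representation theorem; hence $p = q$.

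The main obstacle is the middle step: universality is precisely the hypothesis that rules out the pathological situation in which some continuous function is inapproximable from $\RKHS$ and could witness $p \neq q$ through a nonzero integral. Without universality, there is no reason the equality $\EE[f(X)] = \EE[f(Y)]$ on $\RKHS$ should propagate to all of $C(\OSpace)$, and standard examples (e.g.\ polynomial RKHSs on $\Reals^D$) show the theorem would fail. The first step is by definition and the third is a standard measure-theoretic fact, so the entire proof rides on the sup-norm-density property baked into the universality assumption; continuity of $k$ and compactness of $\OSpace$ are what make $C(\OSpace)$ the right intermediary for that density argument to close.
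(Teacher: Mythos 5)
Your proof is correct. The paper does not actually prove this statement---it imports it verbatim from \citet{Gretton:2012} (their Theorem~5) and uses it only to conclude that $\MMD(\RKHS,\cdot,\cdot)$ is a metric in the approximation-error corollary. Your argument (trivial forward direction; for the converse, extend $\EE[f(X)]=\EE[f(Y)]$ from the unit ball to all of $\RKHS$ by homogeneity, push it to $C(\OSpace)$ via the sup-norm density that defines universality, and finish with the uniqueness part of the Riesz representation theorem on a compact metric space) is exactly the standard proof given in the cited source, and your identification of universality as the load-bearing hypothesis is on point.
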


\begin{corollary}[approximation error]
Assume $\truedist$ is in the family $\{p_\theta\}$ and that $\RKHS$ is an RKHS induced by a characteristic kernel.  Then 
\[
\inf_{\theta} \MMD(\RKHS,\truedist,p_{\theta}) = 0
\]
and the infimum is achieved at $\theta$ satisfying $p_\theta = \truedist$.
\end{corollary}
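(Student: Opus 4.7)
The plan is to derive this corollary directly from \cref{thm:uniMMD} by exploiting the two hypotheses: (i) realizability, i.e., $\truedist \in \{p_\theta\}$, and (ii) that the kernel is characteristic. A characteristic kernel is precisely one whose mean embedding $p \mapsto \mu_p$ is injective on the space of Borel probability measures, and since $\MMD^2(\RKHS,p,q) = \|\mu_p - \mu_q\|_{\RKHS}^2$, this injectivity is equivalent to the statement that $\MMD(\RKHS,p,q) = 0$ iff $p = q$. The cited \cref{thm:uniMMD} delivers this equivalence in the universal case, and the same conclusion is standard for characteristic kernels more generally, so I would invoke it in that form.

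First, by hypothesis (i), pick some $\theta^\star \in \Theta$ with $p_{\theta^\star} = \truedist$. Then $\MMD(\RKHS,\truedist,p_{\theta^\star}) = \MMD(\RKHS,\truedist,\truedist) = 0$ directly from the definition, since $\mu_{\truedist} - \mu_{\truedist} = 0$ in $\RKHS$. Combined with the trivial lower bound $\MMD(\RKHS,\truedist,p_\theta) \ge 0$ valid for every $\theta \in \Theta$, this forces
\[
\inf_{\theta} \MMD(\RKHS,\truedist,p_{\theta}) = 0,
\]
and moreover shows that the infimum is attained (at $\theta^\star$).

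For the second clause, suppose $\theta \in \Theta$ achieves the infimum, i.e., $\MMD(\RKHS,\truedist,p_\theta) = 0$. By hypothesis (ii) the kernel is characteristic, so $\MMD(\RKHS,\cdot,\cdot)$ is a metric on Borel probability measures, and thus $\MMD(\RKHS,\truedist,p_\theta) = 0$ implies $p_\theta = \truedist$. Conversely, any $\theta$ with $p_\theta = \truedist$ yields $\MMD = 0$ by the argument above. Hence the set of minimizers coincides with $\{\theta : p_\theta = \truedist\}$, as claimed.

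There is no real obstacle here; the result is essentially a one-line consequence of \cref{thm:uniMMD} combined with realizability. The only subtlety worth flagging is the mild mismatch in terminology between \emph{universal} (as stated in \cref{thm:uniMMD}) and \emph{characteristic} (as used in the corollary): universal kernels on compact metric spaces are characteristic, and more generally the characteristic property is itself exactly the injectivity of the mean embedding, which is what the proof uses. So if one wants to be strictly faithful to the cited theorem, the cleanest route is to state the corollary under the hypothesis that $\RKHS$ is induced by a characteristic kernel and then appeal to the fact that $\MMD$ is a metric in that case.
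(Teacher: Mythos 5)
Your proof is correct and is essentially the paper's own argument written out in full: the paper's proof simply notes that \cref{thm:uniMMD} makes $\MMD$ a metric and calls the rest immediate, which is exactly the realizability-plus-metric reasoning you spell out. Your remark on the universal-versus-characteristic terminology mismatch is a fair observation but does not change the substance of the argument.
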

\begin{proof}
By \cref{thm:uniMMD}, it follows that $\MMD^2 (\RKHS,\cdot,\cdot)$ is a metric.  The result is then immediate.
\end{proof}

\newpage
\printbibliography[minnames=99,maxnames=100]

\end{document}